\tikzset{
  treenode/.style = {align=center, inner sep=0pt, text centered,
    font=\sffamily},
  arn_n/.style = {treenode, circle, black, font=\sffamily\bfseries, draw=black,
    fill=white, text width=1.5em},% arbre rouge noir, noeud noir
  arn_r/.style = {treenode, circle, black, font=\sffamily\bfseries, draw=black,
    fill=white, text width=1.0em},% arbre rouge noir, noeud rouge
  arn_x/.style = {treenode, rectangle, draw=black,
    minimum width=0.5em, minimum height=0.5em}% arbre rouge noir, nil
}
\setlist[itemize]{align=parleft,left=5pt..1.5em}
\definecolor{codegreen}{rgb}{0,0.6,0}
\definecolor{codegray}{rgb}{0.5,0.5,0.5}
\definecolor{codepurple}{rgb}{0.58,0,0.82}
\definecolor{backcolour}{rgb}{0.95,0.95,0.92}
\newcommand{\alg}{{\textsc{Cookbook}}\xspace}
\newcommand{\algmulti}{{\textsc{Cookbook-Mix}}\xspace}
\newcommand{\itunes}{\textsc{iTunes-Amazon}\xspace}
\newcommand{\tydi}{\textsc{tydiqa}\xspace}
\newcommand{\piqa}{\textsc{piqa}\xspace}
\newcommand{\squad}{\textsc{squad}\xspace}
\newcommand{\wino}{\textsc{Winogrande}\xspace}
\newcommand{\beer}{\textsc{Beer}\xspace}
\newcommand{\dblp}{\textsc{dblp-acm}\xspace}
\newcommand{\msmarco}{\textsc{ms\_marco}\xspace}
\newcommand{\docqa}{\textsc{document-QA}\xspace}
\newcommand{\match}{\textsc{matching}\xspace}
\newcommand{\cqa}{\textsc{commonsense-select}\xspace}
\newcommand{\tkret}{\textsc{token-retrieval}\xspace}
\newcommand{\subj}{\textsc{entity-disambiguation}\xspace}
\newcommand{\mcqa}{\textsc{multi-choice-QA}\xspace}
\newcommand{\mistral}{\textsc{Mistral-7B}\xspace}
\newcommand{\llama}{\textsc{Llama-2-7B}\xspace}
\newcommand{\cerebras}{\textsc{Cerebras-GPT-1.3B}\xspace}
\newcommand{\neo}{\textsc{GPT-Neo-1.3B}\xspace}
\newcommand{\pythia}{\textsc{Pythia-1B}\xspace}
\newtheorem{definition}{Definition}
\newcommand{\format}{\text{fmt}}
\newcommand{\acc}{\mathrm{acc}}
\newcommand{\X}{\mathcal{X}}
\newcommand{\Y}{\mathcal{Y}}
\newcommand{\I}{\mathcal{I}}
\newcommand{\D}{\mathcal{D}}
\newcommand{\A}{\mathcal{A}}
\newcommand{\template}{G}
\newcommand{\teval}{T^{\mathrm{eval}}}
\newcommand{\bmteval}{\bm{T}^{\mathrm{eval}}}
\newcommand{\deval}{\mathcal{D}^{\mathrm{eval}}}
\newcommand{\bmdeval}{\bm{\mathcal{D}}^{\mathrm{eval}}}
\newcommand{\avg}{\mathrm{avg}}
\LetLtxMacro\oldttfamily\ttfamily
\DeclareRobustCommand{\ttfamily}{\oldttfamily\csname ttsize\endcsname}
\newcommand{\changeoperator}[1]{%
  \csletcs{#1@saved}{#1@}%
  \csdef{#1@}{\changed@operator{#1}}%
}
\newcommand{\changed@operator}[1]{%
  \mathop{%
    \mathchoice{\textstyle\csuse{#1@saved}}
               {\csuse{#1@saved}}
               {\csuse{#1@saved}}
               {\csuse{#1@saved}}%
  }%
}
\let\oldnl\nl% Store \nl in \oldnl
\newcommand{\nonl}{\renewcommand{\nl}{\let\nl\oldnl}}% Remove line number for one line
\title{\alg: A framework for improving LLM generative abilities via programmatic data generating templates}
\author[1]{Avanika~Narayan$^*$}
\author[1]{Mayee~F.~Chen \thanks{Equal contribution. Contact \url{avanikan@stanford.edu}, \url{mfchen@stanford.edu}}}
\author[1]{Kush~Bhatia}
\author[1]{Christopher~R\'e}
\affil[1]{Department of Computer Science, Stanford University}
\begin{document}

\maketitle

\begin{abstract}
Fine-tuning large language models (LLMs) on instruction datasets is a common way to improve their generative capabilities. 
However, instruction datasets can be expensive and time-consuming to manually curate, and while LLM-generated data is less labor-intensive, it may violate user privacy agreements or terms of service of LLM providers. 
Therefore, we seek a way of constructing instruction datasets with samples that are not generated by humans or LLMs but still improve LLM generative capabilities.
In this work, we introduce \alg, a framework that programmatically generates training data consisting of simple patterns over \emph{random tokens}, resulting in a scalable, cost-effective approach that avoids legal and privacy issues. 
First, \alg uses a \emph{template}---a data generating Python function---to produce training data that encourages the model to learn an explicit pattern-based \emph{rule} that corresponds to a desired task. 
We find that fine-tuning on \alg-generated data is able to improve performance on its corresponding task by up to 52.7 accuracy points.
Second, since instruction datasets improve performance on multiple downstream tasks simultaneously, \alg algorithmically learns how to mix data from various templates to optimize performance on multiple tasks.
On the standard multi-task GPT4ALL evaluation suite, Mistral-7B fine-tuned using a \alg-generated dataset attains the best accuracy on average compared to other 7B parameter instruction-tuned models and is the best performing model on 3 out of 8 tasks. 
Finally, we analyze when and why \alg improves performance and present a metric that allows us to verify that the improvement is largely explained by the model’s generations adhering better to template rules.
\end{abstract}
\section{Introduction}\label{sec:intro}

Fine-tuning large language models (LLMs) on instruction datasets~\citep{OpenOrca, longpre2023flan, achiam2023gpt, OpenHermes}, can significantly improve LLM generative capabilities. However, these datasets can be time-consuming and expensive to curate. Moreover, recent alternatives such as user chat logs (e.g., shareGPT) and LLM-generated data~\citep{OpenOrca, OpenHermes} are less labor-intensive but can still be costly and may violate user privacy and the terms of service of LLM providers.  \emph{Programmatically generated data}, where samples are generated by programs rather than by humans or LLMs, is a potential alternative that is more scalable, cost-effective, and avoids privacy and legal issues. Examples of programmatic data include data used in Dyck languages, sequence copying, and bit parity tasks~\citep{jelassi2024repeat, hahn2020theoretical}. However, it is unclear to what extent programmatic data can be used in place of standard instruction datasets to improve model capabilities; existing work focuses on using it for understanding how models learn specific patterns and for improving pre-training, which is still followed by fine-tuning on the downstream task~\citep{nanda2023progress, wu2022insights}. This raises the question: \emph{is it possible to programmatically generate an instruction dataset whose performance is competitive with human or LLM-generated instruction datasets on multiple downstream generative tasks?} 

There are two key considerations for programmatically generating a performant instruction dataset. 
\begin{itemize}[topsep=0pt]
    \item First, how do we design programmatic data that teaches the model rules corresponding to complex generative tasks that instruction datasets typically improve on? It is straightforward to construct programmatic data that teaches simple patterns like copying, but doing the same for more complex rules is challenging. For example, consider the document-based question answering (QA) task, where a model must answer a question about a document. One underlying rule for this task is to \emph{identify} the relevant subpassage and \emph{extract} an answer; it is unclear how to translate this rule into programmatic data.
    \item Second, since instruction datasets improve on many tasks, how do we mix programmatic data corresponding to different rules to optimize performance of multiple downstream tasks simultaneously? Recent work has shown that data mixture proportions can significantly impact downstream performance~\citep{albalak2024survey}. Since naive trial-and-error is costly and has a large search space, we need a principled mixing approach.
\end{itemize}

\begin{figure}[t]
    \centering
    \includegraphics[width=\textwidth]{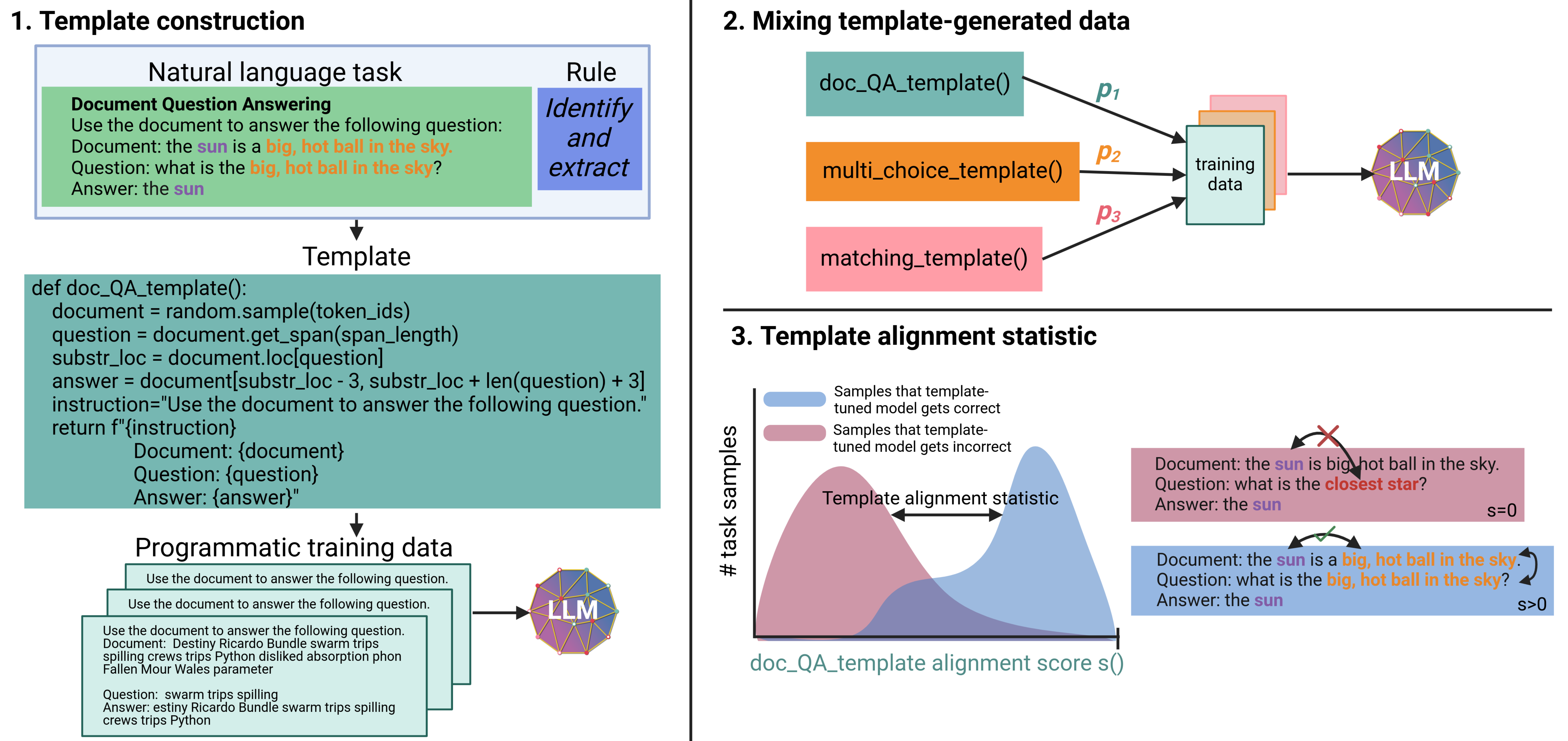}
    \caption{\textbf{\alg.} (1) Templates approximate a given task's ``rule'' and generate data consisting of patterns over random tokens. (2) Template-generated data can be mixed to improve multiple capabilities. (3) The template alignment statistic measures the extent to which the rule learned by the template is responsible for improving LLM performance.}  
    \label{fig:banner}
\end{figure}

We propose \alg, a framework that explores the use of \emph{templates}---simple Python functions---to produce programmatic training data that improves LLM performance by encouraging the model to learn explicit pattern-based rules that approximate desired generative task capabilities. Each template is designed for a given natural language (NL) task and has two key properties: 1) it invokes task-specific data generating functions, which specify how inputs and outputs are generated in order to approximate a task rule, and 2) it operates over a random token space, which not only avoids privacy and legal concerns but also allows us to express a complex rule as a pattern over the tokens. While templates are task-specific, they are composed using simple list operations---sampling, span selection, shuffling, replacement, and concatenation---and we show that their construction can be either manually specified or automatically generated using models such as GPT-4. To more clearly understand our templates, consider a rule for document QA (Figure~\ref{fig:banner}) which is to identify the text in the document that pertains to the question and extract the answer from nearby. Our template for document QA constructs the document as a random sequence uniformly sampled over the token vocabulary. To instantiate the \emph{identify and extract} rule, our template samples the question as a small span of the document token sequence and defines the answer as the span of tokens surrounding the question. 

Next, to improve performance on many downstream tasks simultaneously, we discuss how to construct our instruction dataset as a mixture of samples generated from multiple templates. We propose a simple mixing algorithm over \alg templates, which we call \algmulti. Given \alg templates that generate data for several tasks, this algorithm first finetunes the base model on data generated by each template. Then, we use weak supervision (WS) methods~\citep{ratner2019training} to estimate the accuracy of each fine-tuned model on each downstream task without requiring labeled evaluation data. The proportions are set as the softmax of the estimated accuracies, and our instruction dataset is generated according to these proportions. Notably, these proportions are theoretically optimal under a linearity assumption on the accuracies, which we empirically validate.

Empirically, \alg programmatically generates instruction data that improves LLM performance on generative tasks, unveiling an exciting finding: \emph{we can improve a model's natural language capabilities via standard supervised fine-tuning on patterned sequences of random tokens}. We find that Mistral fine-tuned on \algmulti-generated data outperforms Mistral and Llama 2 fine-tuned on existing instruction datasets (e.g., OpenOrca, OpenHermes, Capybara) on average on the multi-task GPT4ALL evaluation suite, and, on an individual task level, is the best performing model on 3 out of 8 tasks (top-1 for win-rate). We further find that fine-tuning on data generated from individual \alg templates improves performance of the corresponding NL task across 6 out of 7 downstream tasks---spanning document QA, retrieval, commonsense QA, entity matching, and entity disambiguation---on \neo. In particular, \alg outperforms the base model by up to 52.7 points, confirming the efficacy of \alg at both the instruction dataset level and a more fine-grained single-task level.

Finally, we analyze when and why \alg can improve model performance. First, we define template alignment scorers, which quantify adherence of a NL task sample to a template rule; for instance, in document QA (Figure~\ref{fig:banner}) we score samples based on how close the answer and question words are in the document (i.e., the \emph{identify and extract} rule). Using these scorers, we develop the \emph{template alignment statistic}, which empirically confirms that the models improvement is largely due to better adherence to template rules. 
Second, we study the role of the template’s data generating function, random tokens, and the base model to better understand when training on \alg data generalizes to NL tasks.
\section{Related work}\label{sec:rel-work}

We present an abbreviated related work and provide a full treatment in Appendix~\ref{sec:app_related_work}. 
Instruction datasets range from those that are manually curated~\citep{wang2022super, longpre2023flan} to those that are generated by LLMs~\citep{alpaca, li2023camel, OpenOrca}. Programmatically generated data is an alternative to human and LLM-generated datasets, and has been explored as a replacement for manually labeled data~\citep{ratner2020snorkel, zhang2022survey}. Programmatic data is also used in pre-training, incorporating latent structures similar to those in natural language~\citep{wu2022insights, papadimitriou-jurafsky-2020-learning, krishna2021does} and for understanding models~\citep{nanda2023progress, zhang2022unveiling}. For multi-task performance, recent data mixing algorithms set proportions based on online model performance, largely for pre-training~\citep{chen2023skillit, xie2023doremi}.

\section{\alg}\label{sec:alg}

\begin{figure}[ht]
\centering
\includegraphics[width=1.0\textwidth]{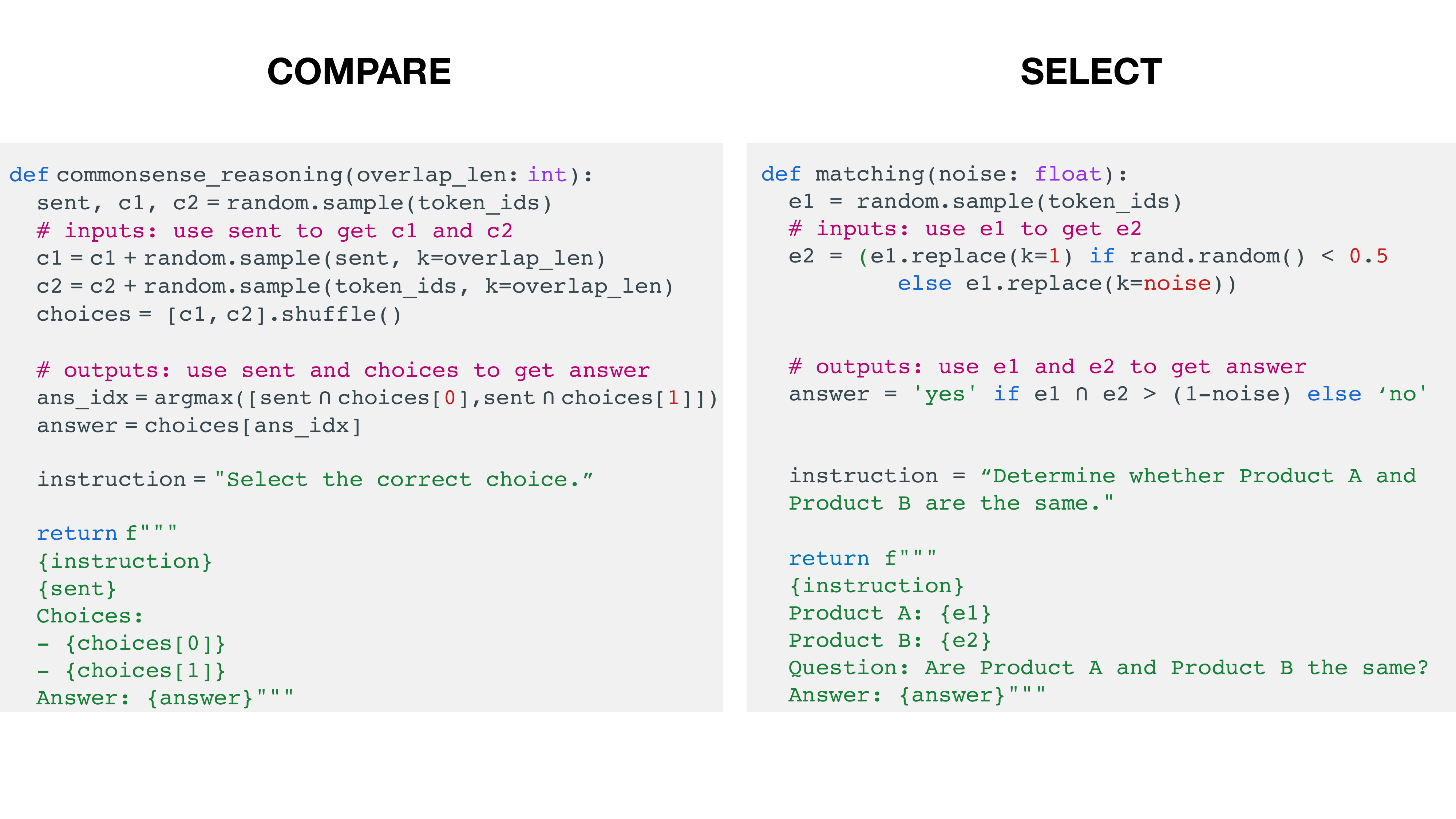}
\caption{\textbf{Example templates (pseudocode)}. Templates construct the inputs, outputs, and then return a formatted sample. 
(Left) template for commonsense reasoning which generates two answer choices, where one choice (the answer) has a greater token overlap to the sentence. (Right) template for entity matching, which generates two entities which are labeled a match if their overlap exceeds a threshold. }
\label{fig:templates}
\end{figure}

We set up the problem of constructing a template for a single task and mixing template-generated data for multiple tasks in Section~\ref{sec:setup}. We then describe our framework, \alg, of task-specific templates for generating programmatic data that can improve model abilities on generative tasks (Section~\ref{sec:cookbook}). 
In Section~\ref{sec:combination}, we present \algmulti, which mixes data from several templates to improve performance on multiple tasks.

\subsection{Setup}\label{sec:setup}

\textbf{Constructing templates (single task).} \; We are given a natural language (NL) generative task $T$ as input. Samples of $T$ have several components. Let $I \in \I$ be the NL instruction for the task, $\bm{x} \in \X$ denote its inputs, and $y \in \Y$ denote the task output. Let $\format_{T}()$ be the task formatter that formats the text sample using $\bm{x}$ and $y$, prepending $I$. Using document QA as an example, $\bm{x} = \{x_1, x_2\}$ where $x_1$ is the document and $x_2$ is the question, $y$ is the answer, and $\format_{T}(\bm{x}, y, I)$ could be \texttt{f``Use the document to answer the following question. Document: \{x1\}. Question: \{x2\}. Answer: \{y\}''}. Given this information about $T$, we construct a template $G_T$, a Python function that generates samples. The goal is to design the template such that the generated samples can be used to fine-tune a model that does well on $T$. More precisely, we fine-tune a base model $f: \X \rightarrow \Y$ on $n$ samples generated from $G_T$ to yield $f_{G_T, n}$, and our goal is for $f_{G_T, n}$ to perform well on $T$.

\textbf{Mixing template-generated data (multi-task).} \; We have as input $l$ templates $\bm{G_{T}} = \{G_{T_1}, \dots, G_{T_l}\}$ generated using \alg for tasks $T_1, \dots, T_l$. We use these templates to generate $n$ samples of programmatic data with mixture proportions $\bm{p} = \{ p_1, \dots, p_l\} \in \triangle^l$, the probability simplex. Define $f_{\bm{G_{T}}, n\bm{p}}$ as the base model $f$ fine-tuned on this mixture, with $np_i$ samples generated by each $G_{T_i}$. Suppose there are $m$ downstream evaluation tasks $\bmteval = \{\teval_1, \dots, \teval_m\}$  (note that they may be different from the $l$ tasks that the templates were constructed for), and let $\acc(f_{\bm{G_{T}}, n\bm{p}}, \teval_j)$ be the accuracy of $f_{\bm{G_{T}}, n\bm{p}}$ on $\teval_j$. Our goal is to determine the $\bm{p}$ that maximizes average accuracy, $\underset{\bm{p} \in \triangle^l}{\text{maximize}} \frac{1}{m} \sum_{j = 1}^m \acc(f_{\bm{G_{T}}, n\bm{p}}, \teval_j)$.

\subsection{\alg templates}\label{sec:cookbook}

We first describe the high-level process by which a template generates samples. We present examples of \alg templates in section~\ref{sec:examples}, explaining how they are composed using common operators. In section~\ref{sec:automatic_creation}, we explore ways to automate template creation.

\textbf{Data generation process} \; The data generation process involves (1) constructing the inputs, $\bm{\hat{x}}$, and (2) constructing the output $\hat{y}$ based on $\bm{\hat{x}}$. Inputs are categorized into a \emph{parent input} and \emph{child inputs}; the parent input is a sequence of randomly sampled tokens, and the child inputs are constructed from the parent input. The output is constructed from the parent and child inputs using a data generating function that approximates the task rule. The sample is then formatted as $\format_T(\bm{\hat{x}}, \hat{y}, I)$.

\subsubsection{Example templates}\label{sec:examples}

We present example templates from three generative \textit{task families},  described below.
\begin{itemize}[topsep=0pt]
    \item \textbf{Selection} tasks involve choosing one of the given inputs (answer choices) as an output. Examples: entity disambiguation, commonsense reasoning QA, and multiple choice QA.
    \item \textbf{Search} tasks require extracting from the input. Examples: document QA and retrieval. 
    \item \textbf{Comparison} tasks involve outputting ``yes''/``no'' based on the relationship among inputs. Example: entity matching. 
\end{itemize}

For each example template, we describe its inputs (parent, child) and outputs and present corresponding pseudocode.
Across the tasks, we identify five operators for generating inputs:  \texttt{random.sample()}, \texttt{random.shuffle()}, \texttt{get\_span()} (extract a random span from a sequence of tokens), \texttt{replace(k)} (replace tokens in a sequence with probability $k$), and list concatenation ($+$). 
All task templates (7 total) can be found in Appendix~\ref{sec:app_temp}.

\textbf{Document QA (Search)} \; The document QA tasks involves answering a question over a provided textual context.

\begin{itemize}[topsep=0pt]
    \item \textbf{Template pseudocode}: See ``doc\_QA\_template'' in Figure~\ref{fig:banner}.
    \item \textbf{Inputs}: \texttt{document} is the parent input, and \texttt{question} is the child input. \texttt{document} is generated as a sequence of randomly sampled tokens from the token vocabulary. \texttt{question} is generated as a subspan of \texttt{document}.  
    \item \textbf{Outputs}: \texttt{answer} is the output which is generated by locating the \texttt{question} in the \texttt{document} and returning the tokens within $k$ indices of the location.
\end{itemize}

\textbf{Commonsense Reasoning (Selection)} The task of commonsense reasoning QA is to select the answer choice which best completes the sentence. For example, given a sentence of ``to eat more sweets I should'' and two choices of (1) ``drink water'' and (2) ``eat more candy'', the correct answer choice is ``eat more candy''. 

\begin{itemize}[topsep=0pt]
    \item \textbf{Template pseudocode}: See ``commonsense\_reasoning'' in left of Figure~\ref{fig:templates}. 
    \item \textbf{Inputs}: \texttt{sent} is the parent input, and \texttt{c1}, \texttt{c2} are the child inputs. \texttt{sent} is generated as a sequence of randomly sampled tokens from the token vocabulary. Both \texttt{c1} and \texttt{c2} are generated as sequences of randomly sampled tokens from the token vocabulary. For \texttt{c2} we append additional tokens which is a sequence of tokens sampled from \texttt{sent}. \texttt{choices} is the list containing \texttt{c1} and \texttt{c2} which is randomly shuffled.
    \item \textbf{Outputs}: \texttt{answer} is the index of \texttt{choices} which has the greatest overlap with \texttt{sent}. 
\end{itemize}

\textbf{Entity Matching (Comparison)} \; The task of entity matching is to determine whether two entities are equivalent.

\begin{itemize}[topsep=0pt]
    \item \textbf{Template pseudocode}: See ``matching'' in right of Figure~\ref{fig:templates}.
    \item \textbf{Inputs}: The first entity \texttt{e1} is the parent input, and the second entity \texttt{e2} is the child input. \texttt{e1} is generated as a sequence of randomly sampled tokens from the token vocabulary. With 50\% probability, \texttt{e2} is set to \texttt{e1} with a small amount of tokens replaced (as determined by the \text{noise} threshold ); otherwise, \texttt{e2} is generated as a sequence of randomly sampled tokens from the token vocabulary of the same length as \texttt{e1}.
    \item \textbf{Outputs}: \texttt{answer} is determined by the amount of overlap between \texttt{e1} and \texttt{e2}. If the amount of overlap is above a threshold, the output is ``yes''. If not, it is ``no''.
\end{itemize}

Note that across all templates, the task rules are approximated by translating them into patterns that involve token overlap; that is, outputs are constructed from inputs based on token overlap among the inputs. 
However, we do not claim that the token overlap and the five operators above are necessary components of a template; in section~\ref{app_poem_temp} we present a poetry generation task and template, which does not invoke most of these components.

\subsubsection{Automating template creation}\label{sec:automatic_creation}
We introduce a method for automatically generating \alg templates using GPT-4. We do so by prompting the model with a description of the data generation process (see Section~\ref{sec:alg}) alongside two in-context examples mapping a task description to a data generating template (see Appendix~\ref{template_creation} for prompt details). We evaluate the performance of \neo finetuned on data generated from these templates --- which we call \textsc{cookbook-neo-auto} --- finding that performance of \textsc{cookbook-neo-auto} is within 0.2\% of manually curated templates (see Appendix~\ref{template_creation} for details).

\subsection{\algmulti: mixing template-generated data}\label{sec:combination}

Given multiple templates and downstream tasks, we study how to set template proportions to construct a training dataset that improves model performance across all the downstream tasks. 
First, we impose a simple linearity assumption on model accuracy and derive the optimal data proportions $\bm{p}^\star$ that maximize average downstream accuracy of the \alg-tuned model. This approach requires evaluating models on labeled data, so we present an extension where we can approximate $\bm{p}^\star$ using unlabeled data via a latent variable model inspired by weak supervision~\citep{ratner2019training}. Our full algorithm, \algmulti, is provided as Algorithm~\ref{alg:data_proportions} in Appendix~\ref{sec:app_mixing_alg}.

\textbf{Optimal template-generated data proportions.} Our objective is to maximize $f_{\bm{G_T}, n\bm{p}}$'s average downstream accuracy, $\underset{\bm{p} \in \triangle^l}{\text{maximize}} \frac{1}{m} \sum_{j = 1}^m \acc(f_{\bm{G_T}, n\bm{p}}, \teval_j)$.
To solve this problem, we impose a simple linear assumption: that $\acc(f_{\bm{G_T}, n\bm{p}}, \teval_j) = \sum_{i = 1}^m p_i \acc(f_{G_{T_i}, n}, \teval_j)$ for all $j \in [m]$. That is, the accuracy of a model trained on a weighted mixture is equal to the weighted average of models trained on each template, which we empirically assess in Appendix~\ref{sec:app_linear_assumption}. We also add an entropy term $H(\bm{p}) = -\sum_{i = 1}^l p_i \log p_i$ with a weight $\eta \ge 0$ to control how close to uniform $\bm{p}$ should be. Our optimization problem is now

\begin{align}
    \underset{\bm{p} \in \triangle^l}{\text{maximize}} \frac{1}{m} \sum_{j = 1}^m \sum_{i = 1}^l p_i \acc(f_{G_{T_i}, n}, \teval_j) + \eta H(\bm{p}). \label{eq:obj}
\end{align}

Solving this optimization problem yields the following solution.
\begin{restatable}[]{proposition}{weights}
    Define $A \in \mathbb{R}^{l \times m}$ where $A_{ij} = \acc(f_{G_{T_i}, n}, \teval_j)$. Let $\sigma_i = \exp(\frac{1}{m\eta} \sum_{j = 1}^m A_{ij})$ for all $i \in [l]$. Then, the $\bm{p}^\star$ that maximizes~\eqref{eq:obj} satisfies $p_i^\star = \frac{\sigma_i}{\sum_{k = 1}^l \sigma_k}$ for all  $i \in [l]$. 
    \label{prop:weights}
\end{restatable}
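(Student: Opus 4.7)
The objective in~\eqref{eq:obj} is linear in $\bm{p}$ plus the strictly concave entropy term $\eta H(\bm{p})$, so the overall objective is strictly concave on the open simplex (assuming $\eta > 0$; the $\eta = 0$ degenerate case can be handled separately or by taking a limit). Strict concavity together with the compact, convex feasible set $\triangle^l$ guarantees a unique maximizer, which is characterized by the KKT conditions. My plan is to simplify the objective to a one-dimensional sum over templates, apply Lagrangian calculus with only the equality constraint $\sum_i p_i = 1$, and verify that the resulting solution automatically satisfies $p_i > 0$ so that the inequality constraints are inactive.

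\textbf{Key steps.} First I would swap the order of summation to rewrite the linear part of~\eqref{eq:obj} as $\sum_{i=1}^l p_i a_i$, where $a_i \defn \frac{1}{m}\sum_{j=1}^m A_{ij}$. The optimization problem becomes
\begin{equation*}
\max_{\bm{p} \in \triangle^l} \; \sum_{i=1}^l p_i a_i - \eta \sum_{i=1}^l p_i \log p_i.
\end{equation*}
Next I would form the Lagrangian $L(\bm{p}, \lambda) = \sum_i p_i a_i - \eta \sum_i p_i \log p_i - \lambda\bigl(\sum_i p_i - 1\bigr)$, dropping the $p_i \ge 0$ multipliers since I will verify positivity a posteriori. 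Setting $\partial L / \partial p_i = 0$ gives $a_i - \eta(\log p_i + 1) - \lambda = 0$, hence $p_i = \exp\bigl(\tfrac{a_i}{\eta}\bigr)\,\exp\bigl(-1 - \tfrac{\lambda}{\eta}\bigr)$. The prefactor $\exp(a_i/\eta) = \exp\bigl(\tfrac{1}{m\eta}\sum_j A_{ij}\bigr) = \sigma_i$, and imposing $\sum_i p_i = 1$ fixes the normalizer so that $p_i^\star = \sigma_i / \sum_{k=1}^l \sigma_k$.

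\textbf{Wrap-up.} To conclude, I would observe that $p_i^\star > 0$ for every $i$, so the omitted non-negativity constraints are inactive and the stationarity condition above is indeed the full KKT system. Strict concavity of $\eta H$ on the (relative interior of the) simplex, together with compactness of $\triangle^l$, implies that this critical point is the unique global maximizer, completing the proof. The main obstacle is really just bookkeeping---being explicit about the strict concavity / uniqueness argument and about why the $p_i \ge 0$ constraints drop out---rather than any substantive technical difficulty, since the result is the standard softmax solution to an entropy-regularized linear program on the simplex.
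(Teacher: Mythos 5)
Your proposal is correct and follows essentially the same route as the paper's proof: rewrite the linear term as $\sum_i p_i a_i$, form the Lagrangian with the single equality constraint, set the stationarity condition, and normalize to obtain the softmax. The only difference is that you explicitly flag strict concavity (for uniqueness) and verify $p_i^\star > 0$ a posteriori to justify dropping the nonnegativity multipliers; the paper omits these checks but they are standard and make your write-up slightly more rigorous, not substantively different.
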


See Appendix~\ref{sec:app_prop1} for the proof. The computation of $\bm{p}^\star$ is straightforward. We train one model per template, $f_{G_{T_i}, n}$ for each $i \in [l]$. We compute the average accuracy across the $m$ downstream tasks per model, and then compute the softmax over these accuracies to get the proportions $\bm{p}^\star$ to determine how many samples are needed from each template. 

\textbf{Extension to evaluation data without ground-truth outputs.} Note that computing $\acc(f_{G{T_i}, n}, \teval_j)$ requires evaluating on a dataset with ground-truth outputs. However, in practice datasets are not often annotated with outputs, which requires us to estimate the accuracy of the model. Since we have multiple individual models, we can frame them as noisy ``voters'' that make predictions on the true label and cast accuracy estimation as a weak supervision problem, a line of work that estimates labels given an unlabeled dataset and several heuristic voters using latent variable models.
We describe this extension of our approach, which uses the MeTaL weak supervision algorithm~\citep{ratner2019training} in Appendix~\ref{sec:app_mixing_alg}.

\section{Experimental evaluations}\label{sec:exps}
\begin{table*}
\centering
\scriptsize
\begin{tabular}{l|cccccccc|c}
\hline
\textbf{Model} & \textbf{arc\_c} & \textbf{arc\_e} & \textbf{boolq} & \textbf{hellaswag} & \textbf{lambada} & \textbf{openbookqa} & \textbf{piqa} & \textbf{winogrande} & \textbf{average} \\
\hline
\llama & 46.25 & 74.58 & 77.74 & 75.99 & 73.92 & 44.20 & 79.11 & 69.14 & 67.61 \\

\textsc{\llama-NH} & 49.74 & 76.09 & 80.00 & 77.72 & 72.99 & \textbf{46.40} & 79.76 & 70.01 & 69.09 \\
\mistral & 54.10 & 79.50 & 83.49 & 81.12 & 75.59 & 44.40 & 82.05 & 73.88 & 71.76 \\
\textsc{\mistral-orca} & 56.14 & 79.59 & 86.57 & 81.73 & 72.37 & 45.60 & \textbf{83.03} & 73.24 & 72.28 \\
\textsc{\mistral-OH} & \textbf{59.98} & 81.65 & \textbf{86.73} & \textbf{81.77} & 73.90 & 44.20 & 82.70 & 73.56 & 73.06 \\
\hline
\textsc{\alg-llama} & 48.04 & 76.77 & 79.20 & 76.04 & 77.10 & 43.40 & 78.56 & 69.30 & 68.55 \\
\textbf{\textsc{cookbook-mist}} & 57.76 & \textbf{83.21} & 85.23 & 80.99 & \textbf{78.23} & 44.00 & 82.32 & \textbf{74.27} & \textbf{73.25} \\
\end{tabular}
\caption{\textbf{Performance on GPT4ALL benchmark.} We denote our \algmulti-tuned \mistral model as \textsc{cookbook-mist}. Averaged across tasks, \textsc{cookbook-mist} has the best accuracy. For baseline datasets, ``NH'' denotes NousHermes, ``OH'' is Open-Hermes, and ``ORCA'' is OpenOrca. }
\label{tab:model_performance_short}
\end{table*}

We evaluate the empirical performance of \alg in a multi-task evaluation setting (i.e., against standard instruction datasets on many downstream tasks) and in a single-task evaluation setting on NL tasks that our \alg templates correspond to. In Appendix~\ref{sec:poetry}, we show that \alg can be extended to creative tasks (i.e., poetry generation) going beyond select, search, and compare tasks.

\subsection{Multi-task Evaluation} \label{sec:multitask}
We evaluate if the \alg framework can improve multiple generative capabilities at once. Here, we compare \algmulti to SoTA instruction datasets (e.g., OpenOrca, OpenHermes).

\textbf{Our method.} We fine-tune two pre-trained base models, \llama~\citep{touvron2023llama} and \mistral~\citep{jiang2023mistral}. The set of templates we consider is: \match, \subj, \mcqa, and \cqa (refer to Appendix~\ref{sec:app_temp} for their constructions). We fine-tune each base model on these templates using the data proportions obtained using \algmulti as described in Section~\ref{sec:combination}.

\textbf{Evaluation datasets.} We evaluate on the standard GPT4ALL~\citep{gpt4all} benchmark considered by many models (e.g., Mamba~\citep{gu2023mamba}, Cerebras-GPT~\citep{dey2023cerebras}). We report average zero-shot accuracy for all tasks using the standard EleutherAI LM Evaluation Harness library (LM eval)~\citep{eval-harness}. The LM eval harness provides a fixed set of prompt formats for all tasks, generating accuracy and standard deviation metrics. 

\textbf{Baselines.} As baselines we consider the base \llama and \mistral models. Additionally, we compare against the current SoTA open-source instruction-tuned versions of these models (OpenHermes-Mistral-7B~\citep{OpenHermes-2.5-Mistral-7B}, Mistral-7B-OpenOrca~\citep{lian2023mistralorca1}) and the closed source versions (Nous-Llama-2-7b~\citep{NousResearch-Llama-2-7b-hf}, Nous-Capybara-7B~\citep{Nous-Capybara-7B-V1.9}).

\textbf{Results.} Results for our multi-task evaluations can be found in Table~\ref{tab:model_performance_short}, with full results in Table~\ref{tab:model_performance_updated}. Averaging across tasks we find that (1) \algmulti improves performance across model variants---Llama and Mistral (see Table~\ref{tab:model_performance_short}), (2) \mistral fine-tuned with \algmulti (which we abbreviate \textsc{cookbook-mistral}) is the best fine-tuned model on the benchmark suite and also outperforms other mixtures of \alg templates, such as a uniform mixture (Table~\ref{tab:ws_perf}), and (3) \textsc{cookbook-mistral} is the best performing model on 3/8 tasks.

\subsection{Single-task Evaluation}
\label{sec:single-task}
This section evaluates single-task performance of task-specific \alg templates, offering a more fine-grained study of the templates themselves without mixing. 
\begin{table}[ht]
\small
\centering
\begin{tabular}{lcccc}
\hline
\textbf{Dataset} & \textbf{\textsc{Neo-Base}} & \textbf{\textsc{Neo-few}} & \textbf{\textsc{\alg-Neo}} \\
\hline
\tydi & 21.8 \scriptsize{± 0.49} & 14.0 \scriptsize{± 2.4} & \textbf{41.9 \scriptsize{± 0.40}} \\
\squad & 14.5 \scriptsize{± 0.78} & 50.4 \scriptsize{± 5.2} & \textbf{60.5 \scriptsize{± 0.68}} \\
\piqa & 0.0 \scriptsize{± 0.0} & 48.5 \scriptsize{± 0.5} & \textbf{52.7 \scriptsize{± 0.43}} \\
\msmarco & 12.6 \scriptsize{± 1.08} & 17.7 \scriptsize{± 1.0} &  \textbf{18.6 \scriptsize{± 0.16}} \\
\wino & 3.9 \scriptsize{± 0.25} & \textbf{64.6 \scriptsize{± 32.4}} &  54.3 \scriptsize{± 0.87} \\
\beer & 26.7 \scriptsize{± 0.0} & 26.7 \scriptsize{± 0.0} & \textbf{66.6 \scriptsize{± 0.0}} \\
\itunes & 39.7 \scriptsize{± 0.0} &  63.1 \scriptsize{± 0.0} & \textbf{69.6 \scriptsize{± 0.0}} \\
\hline
\end{tabular}
\caption{\textbf{Single-task evaluations}. We report accuracy for all datasets except \beer and \itunes (F1-score). \alg-tuned models exhibit improved performance over the base model. }
\label{tab:cerebras_single}
\end{table}

\textbf{Our method.} \quad For each evaluation task, we generate data from the associated \alg template (see Table~\ref{tab:task_template_map} for the mapping between task and template). We fine-tune two base pre-trained models, \neo~\citep{gao2020pile} and \cerebras~\citep{dey2023cerebras}, using data generated from the template. 
In our data generating process, a new batch is sampled at each step. Across tasks, we fine-tune models over an average of 4.2K datapoints. We evaluate the fine-tuned model's zero-shot performance.

\textbf{Evaluation datasets.} \quad For our evaluations, we consider 7 tasks across selection (\piqa, \wino), search (\tydi, \squad, \msmarco) and comparison (\beer, \itunes) categories. For all tasks excluding \beer and \itunes, for which we report F1-score, we report accuracy. Table~\ref{tab:dataset_sizes} in Appendix~\ref{sec:app_sft} contains dataset statistics.

\textbf{Baselines.} \quad We compare our \alg-tuned models to the base model via two baselines: zero-shot and few-shot (where $k$=3) using natural language task samples as in-context examples. Zero-shot involves directly evaluating the base model's ability on a given task, while few-shot primes the base model with task-specific formatting and context. 

\textbf{Results.} \quad Table~\ref{tab:cerebras_single} shows the results for \neo. We defer results for \cerebras to Table~\ref{tab:neo1b_performance} in Appendix~\ref{sec:app_sft}. Our results show that (1) \textsc{cookbook-neo} outperforms the zero-shot and ICL performance on 6/7 tasks, (2) performance of \alg models can be 52.7 points better than base zero-shot performance, suggesting that \alg-generated data can help the model learn a task it was previously unable to do, (3) similar trends can be seen across model families (see Appendix~\ref{sec:app_sft}).
\section{Analysis of \alg}\label{sec:understanding}

First, we propose and measure the template alignment statistic on several tasks to validate that the rules taught by templates are responsible for improved task performance.
Then, we analyze the key components of \alg to better understand its effectiveness, finding that 1) the extent of NL knowledge the base model has impacts \alg's performance; 2) training on data from one \alg template can improve on multiple tasks; 3) rules, not only the instruction formatting of samples, are necessary to the performance of \alg.

\subsection{Template Alignment Framework}

We introduce the template alignment scorer, which measures how similar a NL sample and a template are. We use this to define the template alignment statistic, which measures how better adherence to the template's rule is responsible for improved performance on the task.

\textbf{Template Alignment Scorer} \; Given a task $T$ and its corresponding template $\template_T$ as constructed in Section~\ref{sec:alg}, we propose a template alignment scorer $s_{\template_T}: \X \times \Y \rightarrow [0, 1]$. The template-specific scorer captures how well the input sample's relationship between $\bm{x}$ and $y$ follows that of $\template_T$'s data generating functions. We provide three examples below:
\begin{itemize}[topsep=0pt]
    \item \textbf{Document QA}: the template selects a random span of a document as the question and defines the answer as the surrounding span. $s_{\template_T}$ scores a sample by finding where the answer is located in the document and computing the fraction of words in the question that are near the answer (Fig~\ref{fig:banner}). 
    Note that template-generated samples have a score of $1$.
    \item \textbf{Commonsense Reasoning}: the template generates two answer choices, with only one choice containing tokens from the sentence. $s_{\template_T}$ scores a sample by finding the words that are unique to the two choices, measuring the minimum normalized embedding distance between each choice's unique words and the words in the sentence, and computing the absolute value of the difference between the two choices' distances to the input sentence. This quantity is large if one answer choice has much higher word similarity to the input sentence than the other. Note that when we use the hamming distance, template-generated samples have a score of $1$.
    \item \textbf{Entity Matching}: the template first generates one entity randomly, and then generates the second entity to have some overlap with the first one. $s_{\template_T}$ scores a sample by computing the word overlap between the two entities. Note that template-generated samples have a score equal to $1 - \texttt{noise}$ when the entities are equivalent.
\end{itemize}

\textbf{Template Alignment Statistic} \; Next, we use the alignment scorer to relate model performance to template rule adherence. Recall the base model $f: \X \rightarrow \Y$ and the \alg-tuned model $f_{G_T, n}$. Given NL task samples $\D_T$, let $\D^+_T(G_T) = \{(\bm{x}, y) \in \D_T: f(\bm{x}) \neq y, f_{G_T, n}(\bm{x}) = y\}$ be the set of samples that $f$ gets incorrect and $f_{G_T, n}$ gets correct, and let $\D^-_T(G_T) = \{(\bm{x}, y) \in \D_T: f(\bm{x}) \neq y, f_{G_T, n}(\bm{x}) \neq y\}$ be the set of samples that both $f$ and $f_{G_T, n}$ get incorrect. We now define the template alignment statistic, which characterizes how different the template alignment scores are on samples that the \alg-tuned model gets correct versus incorrect.

\begin{definition} \label{def:template_alignment_statistic}
    Let $S^+_T(G_T) = \{s_{G_T}(\bm{x}, y) \; \forall (\bm{x}, y) \in \D^+_T(G_T) \}$ and $S^-_T(G_T) = \{s_{G_T}(\bm{x}, y) \; \forall (\bm{x}, y) \in \D^-_T(G_T)\}$ be the alignment scores over $\D^+_T(G_T)$ and $\D^-_T(G_T)$, respectively. Define $F^+_{T, G_T}: [0, 1] \rightarrow [0, 1]$ as the empirical CDF over $S^+_T(G_T)$ and similarly define $F^-_{T, G_T}$. Then, the template alignment statistic for task $T$, template $G_T$ is 
    \begin{align}
        \A(T, G_T) = \sup_{s \in [0, 1]} | F^+_{T, G_T}(s) - F^-_{T, G_T}(s)|.
    \end{align}
\end{definition}

Note that $\A(T, G_T)$ is the total variation distance between the empirical distributions of the template alignment scores for $\D^+_T(G_T)$ and $\D^-_T(G_T)$, as well as the Kolmogorov-Smirnov test statistic for the two-sample hypothesis test on whether the scores for these two subsets of $\D_T$ come from the same distribution. This statistic is thus bounded between $0$ and $1$. 
A large value for this statistic suggests that there is significant difference between improved and non-improved samples in terms of how they adhere to template rules. 

\begin{table}[h!]
\small
\centering
\begin{tabular}{|l|c|}
\hline
\textbf{Template $G_T$ / Task $T$} & \textbf{\textsc{template-mistral} $\A(T, G_T)$} \\
\hline
\textbf{\textsc{piqa} / \cqa} & (0.867, \(7.1 \times 10^{-46}\))  \\
\hline
\textbf{\textsc{dblp-acm} / \match} & (1.0, 0.0025) ) \\
\hline
\textbf{\textsc{tydiqa} / \docqa} & (0.615, 0.013)  \\
\hline
\end{tabular}

\caption{\textbf{Template alignment statistics.} Template alignment statistics for \alg-tuned models. Values are the (template alignment statistic, p-value). \alg-tuned models have learned the rule captured by the synthetic.}
\label{tab:verifier_perf}
\end{table}

\textbf{Results.} \; We compute the template alignment statistic for the \mistral-template models on the tasks \tydi, \piqa and \dblp using the \docqa, \cqa and \match templates respectively. Note that for \piqa, we use Sentence-BERT embeddings~\citep{Reimers_2019} to compute embedding distance between the words in the answer choices and the words in the goal.
We find that the template alignment statistic for each task (Table~\ref{tab:verifier_perf}) is statistically significant ($<0.05$), suggesting that rules taught by the templates are responsible for improvement on downstream NL tasks. 
 
\subsection{Understanding \alg random tokens and rules}

\textbf{When do random tokens work?} We hypothesize that fine-tuning on random tokens helps learn a NL task when the model already has sufficient NL capabilities. To test this, we evaluate \alg on \pythia checkpoints (log scale from 0 to 144K) as the base models, where later model checkpoints have more sufficient NL capabilities. We do this on \piqa, \itunes, \squad, \wino, and their respective templates, finding that there exists a checkpoint at which \alg starts to help performance significantly, before which it has little effect (see Figure~\ref{fig:checkpointing} in  Appendix~\ref{app_analysis}). This suggests that the generalization from random tokens to NL is dependent on the base model's NL capabilities.

\textbf{Do rules taught over random tokens result in less overfitting?}
We compare the \emph{overfitting} of rules taught over random tokens versus rules taught over NL tokens. We fine-tune models on the \alg \match template and on the NL matching task itself (\itunes), and evaluate them on \itunes and two other tasks, \piqa and \tydi (see Table~\ref{tab:interference}). We observe that \alg-tuning actually improves base model performance across tasks beyond matching (up to 5 points), whereas NL-tuning worsens base performance (by up to 7 points) for all tasks outside of matching. This suggests that skills taught in the random token space help generalize to multiple tasks and overfit less, offering a potential explanation for how \algmulti outperforms standard instruction datasets.

\textbf{Do we need data generating functions: Are random tokens all we need?} We empirically inspect the importance of the template rules  vs. the task format ($\format_T$) in improving downstream performance by replacing our rule-based generated input and outputs in $\format_{T}()$ with random tokens without any structured patterns. Our results show that finetuning on data without rules leads to worsened performance --- an average performance drop of 18.3 accuracy points (see Table~\ref{tab:rand_tokens} in Appendix~\ref{app_analysis}).

\section{Discussion}\label{sec:disc}
In this work, we tackle the challenge of building instruction datasets for improving generative abilities. Via \alg, we show that it is possible to programmatically generate data that teaches task-specific rules. Moreover, we show how such programmatic data can be mixed to improve performance on many tasks at once. Finally, we propose a method for measuring whether a model has indeed learned a task rule, and whether learning that rule improves downstream task performance. In future work, we seek to further explore how to use programmatic data generation to enable self-improving systems.

\section{Acknowledgements}

We thank Michael Zhang, Neel Guha, Michael Wornow, Ben Spector, Silas Alberti, Jordan Juravsky, Eric Nguyen, and Alyssa Unell for their feedback and discussion.

We gratefully acknowledge the support of NIH under No. U54EB020405 (Mobilize), NSF under Nos. CCF2247015 (Hardware-Aware), CCF1763315 (Beyond Sparsity), CCF1563078 (Volume to Velocity), and 1937301 (RTML); US DEVCOM ARL under Nos. W911NF-23-2-0184 (Long-context) and W911NF-21-2-0251 (Interactive Human-AI Teaming); ONR under Nos. N000142312633 (Deep Signal Processing); Stanford HAI under No. 247183; NXP, Xilinx, LETI-CEA, Intel, IBM, Microsoft, NEC, Toshiba, TSMC, ARM, Hitachi, BASF, Accenture, Ericsson, Qualcomm, Analog Devices, Google Cloud, Salesforce, Total, the HAI-GCP Cloud Credits for Research program,  the Stanford Data Science Initiative (SDSI), Knight-Hennessy Scholarship, NSF Graduate Research Fellowship and members of the Stanford DAWN project: Meta, Google, and VMWare. The U.S. Government is authorized to reproduce and distribute reprints for Governmental purposes notwithstanding any copyright notation thereon. Any opinions, findings, and conclusions or recommendations expressed in this material are those of the authors and do not necessarily reflect the views, policies, or endorsements, either expressed or implied, of NIH, ONR, or the U.S. Government.

\newpage

\bibliography{main}
\bibliographystyle{plain}

\newpage
\appendix
\section*{Appendix}

We provide an overview of the sections of the Appendix:
\begin{itemize}
    \item Appendix~\ref{sec:glossary}: we provide a glossary of the variables and symbols used in this paper. 
    \item Appendix~\ref{sec:app_temp}: we present templates corresponding to generative tasks as well as a sample data point generated from each template. 
    \item Appendix~\ref{sec:app_related_work}: we provide a full related work.
    \item Appendix~\ref{sec:app_combining}: we provide more details on \algmulti, including the formal algorithm, experiments to verify the linearity assumption, and a proof of Proposition~\ref{prop:weights}.
    \item Appendix~\ref{template_creation}: we explain how to use GPT-4 to automatically generate \alg templates and evaluate these generated templates. 
    \item Appendix~\ref{sec:poetry}: we demonstrate how the \alg framework can be extended to a more creative poetry generation task.
    \item Appendix~\ref{sec:app_details}: we provide additional details on the experiments in Section~\ref{sec:exps}.
    \item Appendix~\ref{app_analysis}: we provide additional experiments and details for understanding \alg.
\end{itemize}

\section{Notation}\label{sec:glossary}
The glossary is given in Table~\ref{table:glossary} below.

\begin{table*}[hp!]
\centering
\small
\begin{tabular}{l l}
\toprule
Symbol & Used for \\
\midrule
$T$ & Natural language (NL) task that we aim to construct a template for improving. \\
$I$ & Natural language instruction $I \in \I$ for task $T$. \\
$\bm{x}$ & Inputs $\bm{x} \in \X$ for task $T$. \\
$y$ & Output $y \in \Y$ for task $T$. \\
$\format()$ & Function that formats instruction, inputs and output for task $T$ into $\format(\bm{x}, y, I)$. \\
$G_T$ & Data generating template corresponding to task $T$. \\
$f$ & Base model $f: \X \rightarrow \Y$ whose generative capabilities we aim to improve. \\
$n$ & Number of samples generated from templates used for fine-tuning $f$. \\
$f_{G_T, n}$ & Base model $f$ fine-tuned on $n$ samples generated by template $G_T$. \\
$\bm{G_T}$ & A set of $l$ templates $\bm{G_T} = \{G_{T_1}, \dots, G_{T_l}\}$ used as input for \\
& the data mixing multi-task evaluation setting. \\
$\bm{p}$ & Mixture proportions $\bm{p} = \{p_1, \dots, p_l\} \in \triangle^l$ over data generated from $\bm{G_T}$. \\
$f_{\bm{G_T}, n\bm{p}}$ & Base model $f$ fine-tuned on $n$ samples, with $np_i$ samples from each $G_{T_i}$. \\
$\bm{T}^{\text{eval}}$ & A set of $m$ downstream tasks $\bm{T}^{\text{eval}} = \{T_1^{\text{eval}}, \dots, T_m^{\text{eval}}\}$ we aim to \\
& improve performance on. \\
$\text{acc}(f_{\bm{G_T}, n\bm{p}}, T_j^{\text{eval}})$ & The accuracy of $f_{\bm{G_T}, n\bm{p}}$ on $T_j^{\text{eval}}$. \\
$\eta$ & Weight $\eta \ge 0$ on entropy term $H(\bm{p})$ in~\eqref{eq:obj} controlling how close to \\
& uniform $\bm{p}$ should be. \\
$s_{G_T}$ & Template alignment scorer $s_{G_T}: \X \times \Y \rightarrow [0, 1]$ that measures how much \\
& a NL sample from $T$ adheres to the rule of template $G_T$. \\
$\A(T, G_T)$ & Template alignment statistic for task $T$, template $G_T$ which measures the \\
& total variation distance in template alignment scores for samples of $T$ that the  \\
& model fine-tuned on data from $G_T$ gets correct versus incorrect (see def.~\ref{def:template_alignment_statistic}). \\
\toprule
\end{tabular}
\caption{
	Glossary of variables and symbols used in this paper.
}
\label{table:glossary}
\end{table*}

\newpage

\section{Templates}
\label{sec:app_temp}
We present seven templates (\match, \mcqa, \docqa, \subj, \cqa, \tkret, and \textsc{poetry generation}) and a sample data point generated from each.

\subsection{Matching}\label{sec:matching_template}
\lstset{language=Python,
        basicstyle=\tiny\ttfamily\color{black},
        commentstyle=\color{codegreen},
        keywordstyle=\color{magenta},
        stringstyle=\color{codepurple},
        backgroundcolor=\color{backcolour},
        breaklines=true,
        captionpos=b,
        keepspaces=true,
        numbersep=5pt,
        showspaces=false,
        showstringspaces=false,
        showtabs=false,
        tabsize=2
}

\textbf{Data generating template:}
\begin{lstlisting}[language=Python, basicstyle=\footnotesize\ttfamily]
def matching(noise: int):
    import random

    e1 = random.sample(token_ids, k=1)

    # inputs: use e1 to get e2
    e2 = e1.replace(k=1) if random.random() < 0.5 else e1.replace(k=noise)

    # outputs: use e1 and e2 to get output
    answer = 'yes' if len(set(e1) & set(e2)) > (1 - noise) * len(e1) else 'no'
    instruction = "Determine whether product A and product B are the same.\n"
    
    return f"""
        {instruction}
        Product A: {e1}
        Product B: {e2}
        Question: Are Product A and Product B the same?
        Answer: {answer}"""
\end{lstlisting}

\textbf{Sample data point}:
\begin{tcolorbox}[colback=gray!10] 
\begin{verbatim}
Determine whether product A and product B are the same
Product A:  occur Competing TObuiltembean Hollywood met
Product B:  occur Competing TObuiltembean Hollywood met
Question: Are Product A and Product B the same?
Answer: yes
\end{verbatim}
\end{tcolorbox}

\subsection{Multi-Choice QA}

\textbf{Data generating template:}
\begin{lstlisting}[language=Python, basicstyle=\footnotesize\ttfamily]
def multi_choice_qa (overlap_len: int):
	question = random.sample(token_ids)
    (c1, c2, c3, c4, c5) = random.sample(token_ids, k=5)

	# inputs: use question to get correct choice c5
    c5 = question.sample(k=overlap_len) + c5[:-overlap_len]
	
	# outputs: use  question, [c1, . . ., c5]) to get the answer
    choices = [c1, c2, c3, c4, c5].shuffle()
	ans_idx = argmax([question \cap choices[0],..., question \cap choices[4]])
	answer = choices[ans_idx]
	
	instruction = "Answer the question.\n"
	return f"""
	{instruction}
	Question: {question}
	Choices:
	- {choices[0]}
	- {choices[1]}
	- {choices[2]}
	- {choices[3]}
	- {choices[4]}
	Answer: {answer}"""
\end{lstlisting}

\textbf{Sample data point}:
\begin{tcolorbox}[colback=gray!10] 
\begin{verbatim}
Answer the question.
Why lake Shares wildly Gandhi rers ademic AES 1995 ports?
Choices:
- poke installment
- ORS> unmanned Slave ellar Mart English
- visions AES wildly lakeexper Gamer Gate ports ademicE
- Haibeh Dom
- aiming 462 adultery Greenberg collar
Answer:  visions AES wildly lakeexper Gamer Gate ports ademicE
\end{verbatim}
\end{tcolorbox}

\subsection{Document QA}
\textbf{Data generating template:}
\begin{lstlisting}[language=Python, basicstyle=\footnotesize\ttfamily]
def qa_template(min_slen: int, max_slen: int):
	document = random.sample(token_ids) #input source

	# inputs: use document to get question
	span_length = random.choice(min_slen, max_slen)
	question = document.get_span(k=span_length) # get indexes
 

	# outputs: use document and question to get the answer
	answer = document[loc(document.intersect(question)) - 3 : loc(document.intersect(question)) + 3]
 
    instruction = "Use the document to answer the question.\n"
	
    return f"""{instruction}
	Document: {document}
	
	Question: {question}
Answer: {answer}""
\end{lstlisting}

\textbf{Sample data point}:
\begin{tcolorbox}[colback=gray!10] 
\begin{verbatim}
Use the document to answer the following question.
Document:  Destiny Ricardo Bundle swarm trips spilling crews 
trips Python disliked absorption phon Fallen Mour Wales 
parameter 

Question:  swarm trips spilling
Answer: estiny Ricardo Bundle swarm trips spilling crews trips 
Python
\end{verbatim}
\end{tcolorbox}

\subsection{Entity Disambiguation}
\textbf{Data generating template:}
\begin{lstlisting}[language=Python, basicstyle=\footnotesize\ttfamily]
def entity_disambiguation(e_span_len: int): 
	sentence_1, sentence_2 = random.sample(token_ids)

	# inputs: use sentence_1 to set sentence_2, e1 and e2
	s1, s2 = sentence_1.get_span(k=e_span_len)
	e1, e2 = s1[0], s2[0]
	

	if random.rand() < 0.5
		sentence_2 +=['<blank>'] + s1[1:])
	else:
		sentence_2 +=['<blank>'] + s2[1:])
	

	# outputs: use (question, sentence_1, sentence_2) to get the answer
	support = sentence_2.split('<blank>')[-1]
	span_loc = loc(sentence_1.intersect(support) - 1]
	answer = sentence_1[span_loc - 1]
	
	instruction = "Select the choice which best completes the <BLANK>.\n"
	return f"""{instruction}
	{sentence_1}.{sentence_2}
	Choices:
	- {e1}
	- {e2}
	Answer: {answer}"""
\end{lstlisting}

\textbf{Sample data point}:
\begin{tcolorbox}[colback=gray!10] 
\begin{verbatim}
Select the phrase which best fills in the <BLANK>.
Sentence:  ranc Islands solid illustrates horsepower furry Pay 
early Santiago *)scan output. ographies deals privatization 
<BLANK>  Santiago *)scan output.
Choices:
-  Islands
-  early
Answer:  early
\end{verbatim}
\end{tcolorbox}

\subsection{Commonsense Selection}
\textbf{Data generating template:}
\begin{lstlisting}[language=Python, basicstyle=\footnotesize\ttfamily]
def commonsense_qa(overlap_len: int): 
	question = random.sample(token_ids)
	c1 = random.sample(token_ids)
    c2 = random.sample(token_ids)
    
	# inputs: use question to set c1 and c2
	c1 = c1 + question.sample(k=overlap_len)
	c2 = c2 + random.sample(token_ids, k=overlap_len)
	choices = [c1, c2].shuffle()

	# outputs: use (question, [c1, . . ., c5]) to get the answer
	ans_idx = argmax([len(question.intersect(choices[0])), len(question.intersect(choices[1]]))
	answer = choices[ans_idx]
	
	instruction = "Answer the question.\n"
	return f"""
	{instruction}
	Question: {question}
	Choices:
	- {choices[0]}
	- {choices[1]}
	Answer: {answer}"""
\end{lstlisting}

\textbf{Sample datapoint:}
\begin{tcolorbox}[colback=gray!10] 
\begin{verbatim}
Select the choice which best completes the sentence.
midst Gloss Quant Nest engaging Soul Customer
Choices:
- Zack extraordinarily willingly Gene Quant engaging rest
- Zack extraordinarily willingly Gene With No DXwaters shone
Answer: Zack extraordinarily willingly Gene Quant engaging rest
\end{verbatim}
\end{tcolorbox}

\subsection{Token Retrieval}\label{sec:token_retrieval_template}
\textbf{Data generating template:}
\begin{lstlisting}[language=Python, basicstyle=\footnotesize\ttfamily]
def token_retrieval(overlap_len: int): 
	docs = []
	for i in range(10):
		docs.append(random.sample(token_id)
	
	# inputs: use docs to set question
	idx = random.randint(0,10) # randomly sample an index from 0...9
	doc_with_answer = docs[idx]
	question = doc_with_answer.sample(overlap_len)

	#outputs: use (question, docs) to get the answer
	answer_idx = argmax[question.intersect(docs[0]),...,question.intersect(docs[9])
	answer = docs[answer_idx]
	
	instruction = "Use the documents to answer the question.\n"
	return f"""
	{instruction}
	{docs[0]}
	{docs[1]}
	...
	{docs[9]}
	Question: {question}
	Answer: {answer}"""

\end{lstlisting}

\textbf{Sample datapoint:}
\begin{tcolorbox}[colback=gray!10] 
\begin{verbatim}
Use the documents to answer the question.
Document 0: Def culture Luc writers takingo  adjustment 
Document 1: saturally bites song house speak
Document 2: BEATE level arbitrator Layer eminent substant 
Document 3: spend conesz identification unincorporateddemand 
Document 4: incident existent Back Fellow Turk peacea Father 
Document 5: citations Cove solel Nick cards removing comprise 
Document 6: quilt sun instructed facil enacted council confess 
Document 7: cs probably 59 specify Page harris famine pumps   
Document 8: equal Originally quick Adjust hearted OCK beat
Document 9: Base Sept 168 '</resh593Mr tang moss mash pain 

Question: 593Mr tang moss mash pain Dir
Answer: Base Sept 168 '</resh593Mr tang moss mash pain 
\end{verbatim}
\end{tcolorbox}

\subsection{Poetry Generation}\label{app_poem_temp}
\textbf{Data generating template:}
\begin{lstlisting}[language=Python, basicstyle=\footnotesize\ttfamily]
def rhyme_scheme(rhyme_dict: dict):
	# inputs
	topic = random.sample(token_ids, k=1)
	rhyme_scheme_A, rhyme_scheme_B = rhyme_dict.sample(k=2)

	# inputs: use (topic, rhyme_scheme_A, rhyme_scheme_B) to get lines
	lines = []
	for idx in range(5):
		r_word = rhyme_scheme_A.sample() if idx % 2 == 0 else rhyme_scheme_B.sample()

		line = random.sample(token_ids)
		line = line.insert(topic) + r_word
        lines.append(line)

	instruction = "Write a five line poem with an ABABA rhyme scheme about"
	
    return f"""
    	{instruction} {topic}
    	{lines[0]}
    	{lines[1]}
    	...
    	{lines[4]}""

\end{lstlisting}

\textbf{Sample datapoint:}
\begin{tcolorbox}[colback=gray!10] 
\begin{verbatim}
Write a five line poem with a ABABA rhyme structure about abi
abi abilities Ches 76 Purabul befriend
abi agre Chung Rapt unfit hate redditt
item Struabi pre transcend
Pedro Rescueabi 1080 vines296speed ledet
continents anticip EMP texts sigabi trend
\end{verbatim}
\end{tcolorbox}

\section{Complete Related Works} \label{sec:app_related_work}
There is a large body of work which studies instruction-tuning dataset creation; programmatic data for approximating rules, pre-training, and understanding; and data mixing. Here we review some of them which relate most closely with our work.

\textbf{Instruction tuning} Instruction tuning datasets seek to improve the ability of LLMs to follow instructions (i.e., ``Generate a summary of the following news article''). These datasets consist of input and output pairs, where the input is composed of an instruction with some context and the output is the ``gold'' generation. Early instruction tuning datasets utilized manual data curation to construct these <instruction, input, output> triples for tasks such as summarization~\citep{bai2022training}.  In an effort to scale the instruction tuning dataset curation process, new datasets emerged which prepend natural language instructions to standard NLP tasks where the input-output pairs previously exist, such as Supernatural Instructions~\citep{mishra2022cross, wang2022super}, the Flan Collection~\citep{longpre2023flan}, and in~\cite{chung2022scaling}. More recently, instruction tuning datasets have emerged which use existing LLMs (e.g., GPT-4~\cite{achiam2023gpt}) to generate the data, such as OpenOrca, Alpaca, Wizardcoder, Camel, Amplify-instruct, Self-instruct, and GLAN~\citep{OpenOrca, alpaca, luo2023wizardcoder, li2023camel, daniele2023amplify-instruct, wang2022selfinstruct, li2024synthetic}. In this approach, instructions are fed to the LLM (i.e., ``Generate a list of 5 animals'') and the model generates an output. While this approach sidesteps manual data curation efforts, it suffer from privacy and terms of service violations. \alg circumvents the need for both manual or model-generated data. Some recent approaches also train their own models to generate instruction data, such as instruction backtranslation and Bonito~\citep{li2023selfalignment, nayak2024learning}. While these can avoid legal issues, it can still be computationally expensive to use an LLM to generate a large corpus of instruction data. In contrast, \alg's templates generate data without the inference costs of using an LLM.

\textbf{Programmatic data for approximating rules}: Programmatic data generation has been primarily studied for the automated \textit{labeling of datasets}~\citep{ratner2020snorkel, alex2016data, hearst-1992-automatic}, where heuristics are used to approximate the true labeling rule. For example, a heuristic for Youtube spam comment classification is to check if the word ``subscribe'' is in the comment~\citep{zhang2021wrench}. These works provide programmatic labels for classification tasks given an unlabeled dataset containing inputs only, and they find that training on data with these programmatic labels can produce models that do well on the given task. However, these approaches do not directly extend to our setting, which requires us to construct entire samples for generative tasks---both programmatic inputs and outputs, which are often open-ended generations (e.g. document QA).

\textbf{Programmatic data for improved training} Prior work has shown the value of synthetic, token-level tasks such as set operations as an alternate to natural language data for LLM pre-training~\citep{wu2022insights}. Similarly, other work has demonstrated the efficacy of data with latent structure (e.g., music) in the pretraining phrase~\citep{papadimitriou-jurafsky-2020-learning} as well as for tasks such as summarization~\citep{krishna2021does}. These works are similar to \alg in that they utilize non-language based inputs to improve LLM performance, but unlike \alg, which directly evaluates the model trained on programmatic data, they all require some fine-tuning on natural language data afterwards. Additionally, programmatic data has be shown to improve classification abilities by using it to train a synthetic reasoning module and compose it with a base LLM~\citep{bhatia2023tart}; however, this approach does not apply to generative tasks. 

\textbf{Programmatic data for LLM understanding} Recent works have explore the use of synthetic tasks for understanding the underlying mechanisms of LLMs~\citep{bricken2023towards, nanda2022progress, ravfogel-etal-2019-studying, zhang2022unveiling} as well as for understanding and improving architectures~\citep{fu2022hungry, arora2023zoology, gu2023mamba, poli2023hyena}. Unlike these prior works which seek to use non-language based tasks to understand LLMs, we use such data to improve LLM capabilities. 

\textbf{Data mixing} An important step in curating training data is to determine the mixture over several given groups of data (e.g., domains such as CommonCrawl, ArXiv, Wikipedia, and Github). Data sampling proportions are a key factor in model performance, with many widely-used LLMs having seemingly arbitrary mixture proportions~\citep{touvron2023llama, MosaicML2023Introducing}. Since it is expensive to use trial-and-error to explore the search space of mixture proportions, some recent works have proposed algorithms for data mixing, where proportions are learned based on model performance throughout training~\citep{xie2023doremi, albalak2023efficient}. Data mixing scaling laws have also been proposed~\citep{ye2024data}. These approaches primarily focus on the pre-training setting and only hold when the training groups of data are the same as the evaluation groups of data, which is not true in our setting of instruction datasets. Skill-It~\citep{chen2023skillit} proposes an online algorithm for setting proportions when the training tasks are not equal to the evaluation tasks; we note that \algmulti can be considered a non-online instantiation of their approach.
\section{\algmulti additional details}\label{sec:app_combining}

First, we describe the full algorithm for mixing data from templates, with or without labeled evaluation data. Then, we demonstrate that the linear assumption on accuracies empirically holds. Finally, we provide a proof of Proposition~\ref{prop:weights}, the optimal expression for $p^\star$.

\subsection{\algmulti Template Mixing Algorithm}\label{sec:app_mixing_alg} Algorithm~\ref{alg:data_proportions} presents our approach for how to compute template data proportions. It computes $\bm{\hat{p}}$ according to Proposition~\ref{prop:weights}; namely, $l$ \alg-tuned models---one per template $G_{T_i}$---are trained, and then evaluated on all $m$ downstream tasks. Then, the average accuracy across tasks for each $\alg$-tuned model is computed, and a softmax (with temperature $1/\eta$ depending on the entropy regularization in~\eqref{eq:obj}) is performed over these average accuracies to get $\bm{\hat{p}}$.

When evaluating each $\alg$-tuned model, computing accuracy on each downstream task is straightforward if ground-truth outputs are available for the task. However, if they are unavailable but the outputs are discrete (e.g., answer choices in \piqa or ``yes'' or ``no'' in \itunes), we can use techniques from weak supervision to estimate accuracies (line 6 of Algorithm~\ref{alg:data_proportions}). Weak supervision involves producing programmatically labeled data by modeling several weaker ``voters''  using a latent variable probabilistic graphical model over their votes. In particular, weak supervision algorithms fit the latent variable model using predictions from the voters, estimating the voter accuracies. Then, they aggregate the votes based on the estimated accuracies to get a programmatic label. We utilize a popular weak supervision algorithm from~\cite{ratner2019training} to get an estimated accuracy for each \alg-tuned model in Algorithm~\ref{alg:ws}. In particular, we compute a votes dataset $\D_{\lambda} \in \Y^{m \times l}$ by using each \alg-tuned model to produce predictions on the downstream dataset. This votes dataset is converted into numbers (e.g., ``yes'' becomes $1$ and ``no'' becomes $-1$) and is passed in as input to Algorithm $1$ of~\cite{ratner2019training}. This algorithm uses a conditional independence assumption, which implies a particular sparsity pattern in the inverse covariance matrix of $\D_{\lambda}$, to create a system of equations that is solved with SGD to yield estimated accuracies.

\begin{algorithm}[tb]
\caption{\algmulti.}
\begin{algorithmic}[1]
    \State {\bf Input:} $l$ templates $\bm{G_{{T}}}$, base model $f$, $m$ downstream task datasets $\bmdeval = \deval_1, \dots, \deval_m$, $n$ number of training samples, $\eta$ entropy parameter
    \State Fine-tune $f$ on $n$ samples generated from $G_{T_i}$ to get \alg-tuned model $f_{G_{T_i}, n}$ for all templates $G_{T_i} \in \bm{G_{{T}}}$.
    \If {$\bmdeval$ has ground-truth outputs}
        \State Evaluate each model $f_{G_{T_i}, n}$ on $\bmdeval$ to get $\hat{A}_{ij} = \acc(f_{G_{T_i}, n}, \teval_j)$ for all $j \in [m], i \in [l]$.  
    \Else 
        \State Set $\hat{A}_{ij} =\textsc{EstimateAccs}(f_{G_{T_1}, n}, \dots, f_{G_{T_l}, n}, \deval_j)$, a weak supervision-based method for estimating accuracy without ground-truth outputs, for all $j \in [m], i \in [l]$.
    \EndIf
    \State Compute $\sigma_i = \exp(\frac{1}{m\eta} \sum_{j = 1}^m \hat{A}_{ij})$ for all $i \in [l]$.
    \State Calculate sampling proportion vector $\bm{\hat{p}}$, where $\hat{p}_i = \frac{\sigma_i}{\sum_{k = 1}^l \sigma_k}$ for all $i \in [l]$.
    \State \Return $\bm{\hat{p}}$. $f$ is then fine-tuned on $n$ samples from templates $f_{\bm{G_{{T}}}}$ with proportions $\bm{\hat{p}}$.
\end{algorithmic}
\label{alg:data_proportions}
\end{algorithm}

\begin{algorithm}[tb]
\caption{\textsc{EstimateAccs}.}
\begin{algorithmic}[1]
    \State {\bf Input:} $l$ \alg-tuned models (``voters'') $\lambda_i := f_{G_{T_i}, n}: \X \rightarrow \Y$ for $i \in [l]$, dataset without ground-truth outputs, unlabeled evaluation dataset $\D = \{\bm{x}^j\}_{j=1}^{n}$.
    \State Get predictions $\{\lambda_1(\bm{x}^j), \dots \lambda_l(\bm{x}^j)\}$ across \alg-tuned models for each $x_j \in \D, i \in [l]$, forming votes dataset $D_{\lambda} \in \Y^{n \times l}$.
    \State Estimate accuracy $\hat{a}_i \approx \Pr(\lambda_i(\bm{x}) = y)$ for all $i \in [l]$ by using Algorithm $1$ from~\cite{ratner2019training} on noisy votes $D_{\lambda}$.
    \State \Return $\{\hat{a}_1, \dots, \hat{a}_l\}$.
\end{algorithmic}
\label{alg:ws}
\end{algorithm}

\subsection{Assessing the Linearity Assumption Empirically}\label{sec:app_linear_assumption}

\begin{figure}
    \centering
    \includegraphics[width=\textwidth]{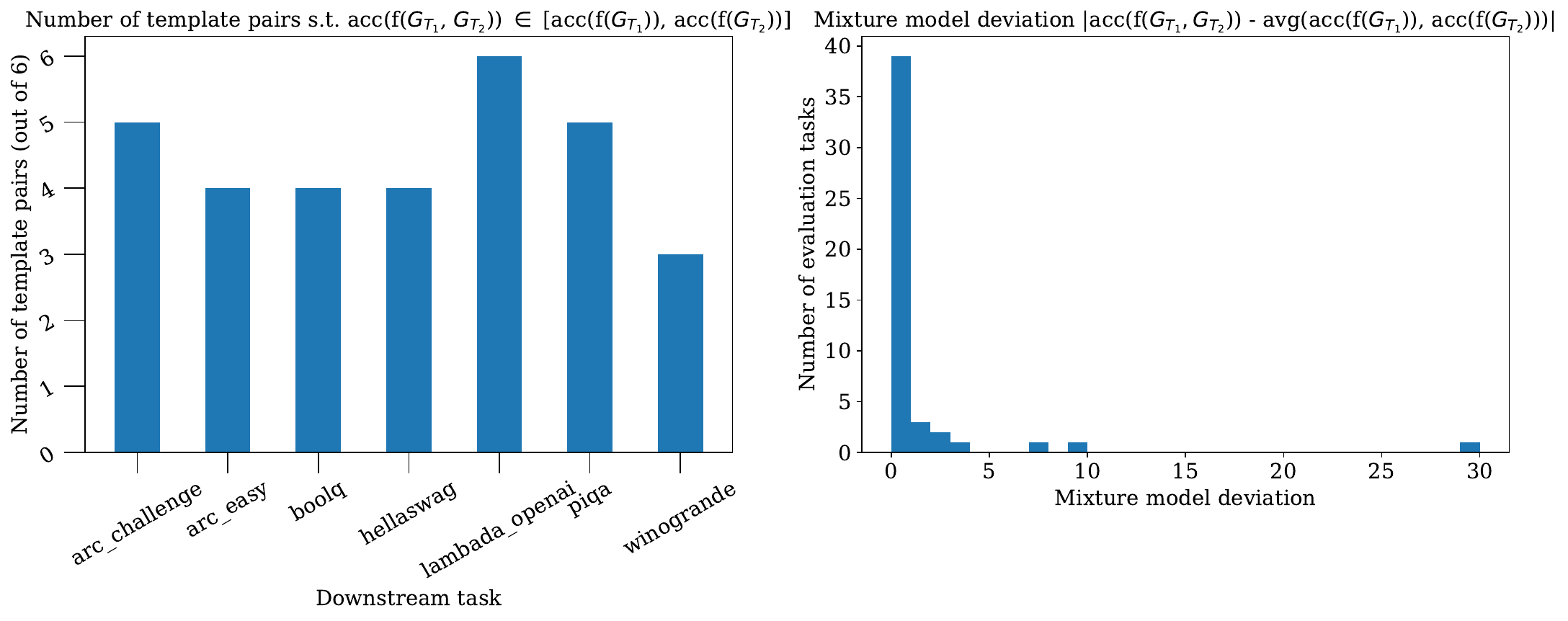}
    \caption{Evaluating if the linearity assumption for data proportions holds empirically. Left: we measure the interpolation property, how often the mixture model has an accuracy in between the individual \alg-tuned models' accuracies. Right: we measure the mixture model deviation, the absolute difference between the mixture model's accuracy and the average of the individual \alg-tuned models' accuracies. Measurements are made across $6$ pairs of templates (over the $4$ templates used in Section~\ref{sec:multitask}), $8$ GPT4ALL evaluation tasks, and the \mistral base model. }
    \label{fig:linear_assumption}
\end{figure}

To see if the linearity assumption applies on real data, we consider two templates $G_{\bm{T}} = \{G_{T_1}, G_{T_2}\}$, and compare the performance of individual \alg-tuned models, $f_{G_{T_1}, n}$ and $f_{G_{T_2}, n}$, versus a model fine-tuned on a uniform mixture over the two templates, $f_{G_{\bm{T}}, n [1/2, 1/2]}$, which we call the mixture model. We re-use the setting described in Section~\ref{sec:multitask} ($4$ templates, $8$ tasks from GPT4ALL, fine-tuning the \mistral model). We measure two quantities, described below. 
\begin{itemize}
    \item \textbf{Interpolation property:} first, we examine how often the mixed model's accuracy interpolates between the individual \alg-tuned model accuracies, $\acc(f_{G_{\bm{T}}, n[1/2, 1/2]}, \teval_j) \in [\acc(f_{G_{T_1}, n}, \teval_j), \acc(f_{G_{T_2}, n}, \teval_j)]$ on all downstream tasks $\teval_j \in \bmteval$. We compute how many pairs of $G_{T_1}, G_{T_2}$ (out of 6 pairs over the $4$ templates) for which this condition holds per evaluation task. Our results are shown in Figure~\ref{fig:linear_assumption} (left), where we find that all downstream tasks have at least half of the template pairs satisfying this interpolation property.
    \item \textbf{Mixture model deviation:} second, we measure the absolute difference between the mixture model's accuracy on a task and the average of the individual \alg-tuned models' accuracies on the task, $|\acc(f_{G_{\bm{T}}, n[1/2, 1/2]}, \teval_j) - \avg(\acc(f_{G_{T_1}, n}, \teval_j), \acc(f_{G_{T_2}, n}, \teval_j))|$. We call this the mixture model deviation, and measure this across template pairs and downstream tasks in Figure~\ref{fig:linear_assumption} (right). We find that most values of this deviation are between $0$ and $1$, meaning that the mixture model's accuracy is less than $1$ accuracy point away from the average of individual accuracies.
\end{itemize}

Based on these results, we do see that there exist some template pairs for which training a model on them results in significantly different performance. This suggests that modeling higher-order interactions among data samples from different templates could help us improve our estimate of $p^\star$, although doing so may result in an optimization problem that lacks a simple closed-form solution.

\subsection{Proof of Proposition~\ref{prop:weights}}\label{sec:app_prop1}
We restate Proposition~\ref{prop:weights} here for completeness.

\weights*

\begin{proof}
    Recall that the objective we aim to maximize is the expression
    \begin{align}
        \frac{1}{m} \sum_{j = 1}^m \sum_{i = 1}^l p_i A_{ij} + \eta H(\bm{p}). 
    \end{align}

    where $A_{ij} = \acc(f_{G_{T_i}, n}, \teval_j)$ and $\bm{p} \in \triangle^l$, e.g. $\sum_{i = 1}^l p_i = 1$. The Lagrangian function is thus
    \begin{align}
        \mathcal{L}(p, \lambda) = \frac{1}{m} \sum_{j = 1}^m \sum_{i = 1}^l p_i A_{ij} + \eta H(\bm{p}) + \lambda \Big(1 - \sum_{i = 1}^l p_i \Big).
    \end{align}

    Taking the partial derivative of the Lagrangian with respect to $p_i$ for some $i \in [l]$ and setting equal to $0$, we get
    \begin{align}
        \frac{\partial \mathcal{L}(\bm{p}, \lambda)}{\partial p_i} = \frac{1}{m} \sum_{j = 1}^m A_{ij} + \eta (-\log p_i -1) - \lambda = 0.
    \end{align}

    Rearranging, we get
    \begin{align}
        &\log p_i = \frac{1}{m\eta} \sum_{j = 1}^m A_{ij} - \frac{\lambda}{\eta} - 1 \\
        \Rightarrow & p_i = \exp\bigg(\frac{1}{m\eta} \sum_{j = 1}^m A_{ij} - \frac{\lambda}{\eta} - 1\bigg).
    \end{align}

    We now plug this expression for $p_i$ into the equation $\sum_{i = 1}^l p_i = 1$:
    \begin{align}
        &\sum_{i = 1}^l \exp\bigg(\frac{1}{m\eta} \sum_{j = 1}^m A_{ij} - \frac{\lambda}{\eta} - 1\bigg) = 1 \\
        \Rightarrow & \exp\Big(\frac{\lambda}{\eta}\Big) = \sum_{i = 1}^l \exp\bigg(\frac{1}{m\eta} \sum_{j = 1}^m A_{ij} - 1\bigg).
    \end{align}

    Finally, substituting $\exp(\lambda/\eta)$  in the expression for $p_i$ gives us
    \begin{align}
        p_i = \frac{\exp\Big(\frac{1}{m\eta} \sum_{j = 1}^m A_{ij} - 1 \Big)}{\sum_{i = 1}^l \exp\Big(\frac{1}{m\eta} \sum_{j = 1}^m A_{ij} - 1\Big)} = \frac{\exp\Big(\frac{1}{m\eta} \sum_{j = 1}^m A_{ij}\Big)}{\sum_{i = 1}^l \exp\Big(\frac{1}{m\eta} \sum_{j = 1}^m A_{ij}\Big)}.
    \end{align}
\end{proof}

\section{Automated Template Creation}\label{template_creation}
In this section, we outline our methodology for auto-generating \alg templates using GPT-4.

\subsection{Template Generation}
We generate task-specific templates by prompting GPT-4. See below for the prompt passed as input to the GPT-4 model. Notice that the prompt has two main components (1) instructions on how to create \alg templates and (2) two in-context examples mapping a task description to a data generating function.

\begin{lstlisting}[language=bash, basicstyle=\footnotesize\ttfamily]

You are an analyst whose job is to help write data generating functions.

Here is the procedure for writing a data generating template:
* constructing the inputs: inputs are categorized into a
parent input and child inputs; the parent input is a randomly sampled sequence of tokens
from the vocabulary
 * constructing the outputs y\_hat based on the inputs: output
is constructed from the parent and child inputs using a function that approximates the task
rule

Here is a sample data generating template for the task: document question answering

Description of task: Given a document and a questions, retrieve information from the document that answers the question.


Inputs: 
--- `Document`: random list of tokens sampled from vocabulary
--- `Question`: randomly selected span from document
Outputs: 
--- `Answer`:  span including all tokens +/-3 tokens before the start and end of question span

Data Generating Template:

def qa_template(min_slen: int, max_slen: int):
	document = random.sample(token_ids) #input source

	# inputs: use document to get question
	span_length = random.choice(min_slen, max_slen)
	question = document.get_span(k=span_length) # get indexes
 
	# outputs: use document and question to get the answer
	answer = document[loc(document.intersect(question)) - 3 : loc(document.intersect(question)) + 3]
 
    instruction = "Use the document to answer the question.\n"
	
    return f"""{instruction}
	Document: {document}
	
	Question: {question}
Answer: {answer}

##### 

Here is a sample data generating template for the task: multiple choice question answering

Description of Task: Given a question and a set of 5 answer choices, select the answer choice which best answers the questions

Inputs: 
--- `Question`: randomly selected span from vocabulary
--- `Choices`: 5 randomly sampled sets of 5 tokens, where one answer (the correct answer) shares tokens with the question.
Outputs: 
--- `Answer`:  answer choice with the maximum overlap with the question.

Data Generating Template:

def multi_choice_qa (overlap_len: int):
	question = random.sample(token_ids)
    (c1, c2, c3, c4, c5) = random.sample(token_ids, k=5)

	# inputs: use question to get correct choice c5
    c5 = question.sample(k=overlap_len) + c5[:-overlap_len]
	
	# outputs: use  question, [c1, . . ., c5]) to get the answer
    choices = [c1, c2, c3, c4, c5].shuffle()
	ans_idx = argmax([question \cap choices[0],..., question \cap choices[4]])
	answer = choices[ans_idx]
	
	instruction = "Answer the question.\n"
	return f"""
	{instruction}
	Question: {question}
	Choices:
	- {choices[0]}
	- {choices[1]}
	- {choices[2]}
	- {choices[3]}
	- {choices[4]}
	Answer: {answer}

####

Write a data generating template for the task: {task_name}

Description of task: {task_description}

Inputs:"""
\end{lstlisting}

\subsection{Evaluation}
Using our automated template generation approach, we generate templates for two tasks, entity matching and token retrieval. We observe that for entity matching, the generated template exactly matches---in terms of functionality---the hand generated template (in Appendix~\ref{sec:matching_template}). For token retrieval, we observe that the generated template is minorly different than the manually generated template (in Appendix~\ref{sec:token_retrieval_template}) in that the ``question'' is sampled as a contiguous span (in oppose to a set of randomly sample tokens) from one of the support documents. A \neo model finetuned on data generated by this template (\textsc{cookbook-neo-auto}), has an accuracy of 18.4: 5.8 points better than the base model and within 0.02 points of a model finetuned on data from the manually generated template (see table below where \textsc{\alg-Neo} is the performance on manually generated templates).

\begin{table}[ht]
\small
\centering
\begin{tabular}{lcccc}
\hline
\textbf{Dataset} & \textbf{\textsc{\alg-Neo-Auto}} & \textbf{\textsc{\alg-Neo}} \\
\hline
\msmarco  & 18.4 \scriptsize{± 0.01} &  \textbf{18.6 \scriptsize{± 0.16}} \\
\beer & \textbf{66.6 \scriptsize{± 0.0}}  & \textbf{66.6 \scriptsize{± 0.0}} \\
\itunes &  \textbf{69.6 \scriptsize{± 0.0}} & \textbf{69.6 \scriptsize{± 0.0}} \\
\hline
\end{tabular}
\end{table}

\section{Poetry Generation}\label{sec:poetry}

In this section, we show that the \alg framework can be extended beyond the three task families to more creative tasks such as poetry generation. 

\textbf{Task.} \quad We create a poetry generation NL task wherein the model is prompted to write a poem with a prespecified rhyme structure, say ABAB. We construct 30 topics covering a wide range of subjects (e.g.,  shoe or cup) that the models are then prompted to generate a poem about. We measure accuracy as how well the model is able to follow the specified rhyme pattern when generating the poem.

\textbf{Our method.} \quad We fine-tune the \mistral model using a template designed for this poetry task, which we call \textsc{Cookbook-poem}; see Appendix~\ref{app_poem_temp} for the exact template and examples of datapoints. We fine-tune the model for 500 steps with a batch size of 64 where each batch uses fresh samples from the templates. Our template uses a dictionary of rhyme words, but does not use any examples of complete poems. 

\textbf{Results.} \quad When prompted to generate a $4$ line poem with an ABAB rhyme scheme (given 2 in-context examples), base \mistral has 20\% accuracy, \textsc{cookbook-poem} has 60\% accuracy and GPT-4 has 50\% accuracy.

\subsection{\textsc{Cookbook-poem} samples}\label{sec:app_poems}

We present sample poems generated by \textsc{cookbook-poem} below.

\begin{tcolorbox}[colback=gray!10] 
\textbf{Topic: flower}
\begin{verbatim}
A flower's petals, so delicate and fair
Its scent is sweet, a perfume so divine
Its beauty's rare, a sight to behold anywhere
Its petals fall, a sign of the decline.
\end{verbatim}
\end{tcolorbox}

\begin{tcolorbox}[colback=gray!10] 
\textbf{Topic: bird}
\begin{verbatim}
A bird that soars above the clouds so high
Its wings are strong, its flight is free
It's not afraid to try
And soars above the sea
\end{verbatim}
\end{tcolorbox}

\begin{tcolorbox}[colback=gray!10] 
\textbf{Topic: deer}
\begin{verbatim}
A deer, a female deer, a lovely doe
Its antlers are a symbol of its pride
Amidst the trees, it knows just where to go
To find the food that's on the other side.
\end{verbatim}
\end{tcolorbox}

\begin{tcolorbox}[colback=gray!10] 
\textbf{Topic: galaxy}
\begin{verbatim}
A galaxy of stars, so far away
Their light, a beacon, in the night
A cosmic dance, a celestial display
A wondrous sight, a sight so bright.
\end{verbatim}
\end{tcolorbox}

\section{Experimental evaluations: additional details}\label{sec:app_details}

We provide additional experimental details for both the multi-task evaluation and single-task evaluation settings.

\subsection{Hyperparameter tuning details}
 
For finetuning our \alg-tuned models, we ran a hyper-parameter searches, sweeping across learning rate ([$4e-06$, $5e-06$, $8e-06$, $5e-05$, $8e-05$]), batch size ([$8$, $16$, $32$, $64$]) and total training steps ([$100$, $200$, $300$, $400$ and $500$]).

\subsection{Multi-task evaluation details}

\subsubsection{Full GPT4ALL evaluations w/standard deviations}
\setlength{\tabcolsep}{3pt}
\begin{table*}[h!]
\centering
\scriptsize
\begin{tabular}{l|cccccccc|c}
\hline
\textbf{Model} & \textbf{arc\_c} & \textbf{arc\_e} & \textbf{boolq} & \textbf{hellaswag} & \textbf{lambada} & \textbf{openbookqa} & \textbf{piqa} & \textbf{winogrande} & \textbf{average} \\
\hline
\llama & 46.25 \tiny{± 1.46} & 74.58 \tiny{± 0.89} & 77.74 \tiny{± 0.73} & 75.99 \tiny{± 0.43} & 73.92 \tiny{± 0.61} & 44.20 \tiny{± 2.22} & 79.11 \tiny{± 0.95} & 69.14 \tiny{± 1.30} & 67.61 \\
\llama-flan & 40.61 \tiny{± 1.44} & 73.40 \tiny{± 0.91} & 77.46 \tiny{± 0.73} & 56.87 \tiny{± 0.49} & 73.76 \tiny{± 0.61} & 31.40 \tiny{± 2.08} & 78.62 \tiny{± 0.96} & 67.72 \tiny{± 1.31} & 62.48 \\
\llama-self-inst & 40.44 \tiny{± 1.43} & 73.27 \tiny{± 0.91} & 74.46 \tiny{± 0.76} & 57.10 \tiny{± 0.49} & 73.37 \tiny{± 0.62} & 32.40 \tiny{± 2.10} & 78.13 \tiny{± 0.96} & 68.82 \tiny{± 1.30} & 62.25 \\
\llama-chat & 44.20 \tiny{± 1.45} & 69.74 \tiny{± 0.94} & 79.76 \tiny{± 0.70} & 75.50 \tiny{± 0.43} & 71.08 \tiny{± 0.63} & 43.80 \tiny{± 2.22} & 77.20 \tiny{± 0.98} & 66.46 \tiny{± 1.33} & 65.97 \\
\llama-NH & 49.74 \tiny{± 1.46} & 76.09 \tiny{± 0.88} & 80.00 \tiny{± 0.70} & 77.72 \tiny{± 0.42} & 72.99 \tiny{± 0.62} & \textbf{46.40 \tiny{± 2.23}} & 79.76 \tiny{± 0.94} & 70.01 \tiny{± 1.29} & 69.09 \\
\mistral & 54.10 \tiny{± 1.46} & 79.50 \tiny{± 0.83} & 83.49 \tiny{± 0.65} & 81.12 \tiny{± 0.39} & 75.59 \tiny{± 0.60} & 44.40 \tiny{± 2.22} & 82.05 \tiny{± 0.90} & 73.88 \tiny{± 1.23} & 71.76 \\
\mistral-inst & 54.18 \tiny{± 1.46} & 81.36 \tiny{± 0.80} & 85.44 \tiny{± 0.62} & 66.04 \tiny{± 0.47} & 71.32 \tiny{± 0.63} & 35.40 \tiny{± 2.14} & 80.30 \tiny{± 0.93} & 74.11 \tiny{± 1.23} & 68.52 \\
\mistral-cap & 54.01 \tiny{± 1.46} & 78.54 \tiny{± 0.84} & 82.57 \tiny{± 0.66} & 78.74 \tiny{± 0.41} & 72.46 \tiny{± 0.62} & 44.80 \tiny{± 2.23} & 79.60 \tiny{± 0.94} & 71.03 \tiny{± 1.27} & 70.22 \\
\mistral-orca & 56.14 \tiny{± 1.45} & 79.59 \tiny{± 0.83} & 86.57 \tiny{± 0.60} & 81.73 \tiny{± 0.39} & 72.37 \tiny{± 0.62} & 45.60 \tiny{± 2.23} & \textbf{83.03 \tiny{± 0.88}} & 73.24 \tiny{± 1.24} & 72.28 \\
\mistral-OH & \textbf{59.98 \tiny{± 1.43}} & 81.65 \tiny{± 0.79} & \textbf{86.73 \tiny{± 0.59}} & \textbf{81.77 \tiny{± 0.39}} & 73.90 \tiny{± 0.61} & 44.20 \tiny{± 2.22} & 82.70 \tiny{± 0.88} & 73.56 \tiny{± 1.24} & 73.06 \\
\hline
\textsc{cb-Llama} & 48.04 \tiny{± 1.46} & 76.77 \tiny{± 0.87} & 79.20 \tiny{± 0.71} & 76.04 \tiny{± 0.43} & 77.10 \tiny{± 0.59} & 43.40 \tiny{± 2.22} & 78.56 \tiny{± 0.96} & 69.30 \tiny{± 1.30} & 68.55 \\
\textsc{cb-Mistral-uni} & 58.70 \tiny{± 1.44} & 82.66 \tiny{± 0.78} & 79.97 \tiny{± 0.70} & 81.09 \tiny{± 0.39} & \textbf{78.46 \tiny{± 0.57}} & 43.60 \tiny{± 2.22} & 81.77 \tiny{± 0.90} & \textbf{75.06 \tiny{± 1.22}} & 72.66 \\
\textsc{cb-Mistral +wsl} & 57.85 \tiny{± 1.44} & 82.37 \tiny{± 0.78} & 86.39 \tiny{± 0.60} & 81.36 \tiny{± 0.39} & 77.94 \tiny{± 0.58} & 44.60 \tiny{± 2.23} & 82.32 \tiny{± 0.89} & 74.19 \tiny{± 1.23} & 73.38\\
\textsc{cb-Mistral +ws} & 57.76 \tiny{± 1.44} & \textbf{83.21 \tiny{± 0.77}} & 85.23 \tiny{± 0.62} & 80.99 \tiny{± 0.39} & 78.23 \tiny{± 0.57} & 44.00 \tiny{± 2.22} & 82.32 \tiny{± 0.89} & 74.27 \tiny{± 1.23} & 73.25 \\
\end{tabular}
\caption{\textbf{Model performance on the GPT4ALL benchmark.} ``CB*'' denotes our \alg tuned models where ``wsl'' is aggregation with labeled evaluation data, ``ws'' is aggregation without using labels (our \algmulti algorithm) and ``uni'' is a uniform mixture, ``NH'' denotes NousHermes Dataset, ``cap'' is the Nous-Capybara model, ``OH'' is Open-Hermes, and ``orca'' is OpenOrca. Averaged across tasks, \textsc{CB-Mistral +wsl} and \textsc{CB-Mistral +ws} are the best performing models.}
\label{tab:model_performance_updated}
\end{table*}

\subsubsection{GPT4ALL Benchmark}
GPT4ALL~\citep{gpt4all} is a standard evaluation benchmark which covers 7 tasks: ARC-Easy~\citep{allenai:arc}, ARC-Challenge~\citep{allenai:arc}, PIQA~\citep{Bisk2020}, Winogrande~\citep{ai2:winogrande}, BoolQ~\citep{clark2019boolq}, Lambada OpenAI~\citep{radford2019language}, OpenBookQA~\citep{OpenBookQA2018} and HellaSwag~\citep{zellers2019hellaswag}.

\subsection{Template mixing evaluation details}
\setlength{\tabcolsep}{3pt}
\begin{table*}[h!]
\centering
\scriptsize
\begin{tabular}{l|cccccccc|c}
\hline
\textbf{Model} & \textbf{arc\_c} & \textbf{arc\_e} & \textbf{boolq} & \textbf{hellaswag} & \textbf{lambada} & \textbf{openbookqa} & \textbf{piqa} & \textbf{winogrande} & \textbf{average} \\
\hline 
\textsc{cb-mcqa} & 55.46 \tiny{± 1.45} & 80.43 \tiny{± 0.81} & 82.54 \tiny{± 0.66} & 80.88 \tiny{± 0.39} & 79.27 \tiny{± 0.56} & 44.20 \tiny{± 2.22} & 81.83 \tiny{± 0.90} & 74.35 \tiny{± 1.23} & 72.37 \\
\textsc{cb-match}  & 57.25 \tiny{± 1.45} & 82.37 \tiny{± 0.78} & 64.80 \tiny{± 0.84} & 81.32 \tiny{± 0.39} & 74.60 \tiny{± 0.61} & 44.20 \tiny{± 2.22} & 82.10 \tiny{± 0.89} & 74.19 \tiny{± 1.23} & 70.10 \\
\textsc{cb-ed} & 55.46 \tiny{± 1.45} & 79.76 \tiny{± 0.82} & 82.42 \tiny{± 0.67} & 80.59 \tiny{± 0.39} & 78.63 \tiny{± 0.57} & 44.20 \tiny{± 2.22} & 81.56 \tiny{± 0.90} & 73.80 \tiny{± 1.24} & 72.05 \\
\textsc{cb-select} & 56.14 \tiny{± 1.45} & 79.92 \tiny{± 0.82} & 83.64 \tiny{± 0.65} & 81.10 \tiny{± 0.39} & 79.06 \tiny{± 0.57} & 43.80 \tiny{± 2.22} & 81.50 \tiny{± 0.91} & 73.72 \tiny{± 1.24} & 72.36 \\
\hline
\textsc{cb-uni} & 58.70 \tiny{± 1.44} & 82.66 \tiny{± 0.78} & 79.97 \tiny{± 0.70} & 81.09 \tiny{± 0.39} & 78.46 \tiny{± 0.57} & 43.60 \tiny{± 2.22} & 81.77 \tiny{± 0.90} & 75.06 \tiny{± 1.22} & 72.66 \\
\textsc{cb-wsl} & 57.85 \tiny{± 1.44} & 82.37 \tiny{± 0.78} & 86.39 \tiny{± 0.60} & 81.36 \tiny{± 0.39} & 77.94 \tiny{± 0.58} & 44.60 \tiny{± 2.23} & 82.32 \tiny{± 0.89} & 74.19 \tiny{± 1.23} & 73.38\\
\textsc{cb-ws} & 57.76 \tiny{± 1.44} & 83.21 \tiny{± 0.77} & 85.23 \tiny{± 0.62} & 80.99 \tiny{± 0.39} & 78.23 \tiny{± 0.57} & 44.00 \tiny{± 2.22} & 82.32 \tiny{± 0.89} & 74.27 \tiny{± 1.23} & 73.25 \\
\end{tabular}
\caption{\textbf{WS performance analysis} ``CB*'' denotes our \alg tuned models on \mistral, where -\textsc{wsl} combines templates using downstream datasets with ground-truth outputs, -\textsc{ws} combines templates without ground-truth outputs and -\textsc{uni} is the uniform mixture. \textsc{match}, \textsc{ed}, \textsc{mcqa}, and \textsc{select}  are abbreviations for \match, \subj, \mcqa, \cqa individual \alg-tuned models.}
\label{tab:ws_perf}
\end{table*}

Table~\ref{tab:ws_perf} provides additional experimental results around our approach for mixing data from templates. For \alg-tuned models that use multiple templates, our main method (\alg plus data proportions obtained using WS on data without ground-truth outputs) is referred to as \textsc{CB-WS}, (previously \textsc{CookBook-Mist} in Table~\ref{tab:model_performance_short}). Furthermore, \textsc{CookBook-WSL} shows the results when we use the ground-truth outputs in our evaluation datasets; while this method has slightly higher average accuracy, \textsc{CB-WS} is able to come close despite not having any output information. \textsc{CookBook-Uni} shows the results on a uniform mixture of template-generated data, which performs worse than both previous methods. Table~\ref{tab:ws_perf} also contains results of the individual models that are \alg-tuned on \match, \subj, \mcqa, and \cqa; we find that these models' performance is worse than the models that use data from multiple templates.

For the template mixing approach that does not use ground-truth outputs, we use the MeTaL algorithm from~\cite{ratner2019training} on each downstream task over $5$ random seeds, learning rate $1e-4$, and number of iterations equal to $5000$ (except for \textsc{boolq}, \piqa, and \wino, which use $2000$).

\subsection{Single-task evaluation details}

\subsubsection{Dataset statistics}
We report dataset statistics for the $7$ datasets considered in the single-task evaluations. We use the ``winogrande-xl'' variant of the \wino task and an evaluate on v1.1 version of the \msmarco dataset. Dataset statistics are listed below in Table~\ref{tab:dataset_sizes}.

\begin{table}[ht]
\centering
\begin{tabular}{lccc}
\hline
\textbf{Dataset} & \textbf{Train} & \textbf{Validation} & \textbf{Test} \\ \hline
\piqa & 16.1K & 1.84K & 3.08K \\
\squad & 87.6K & 10.6K & None \\
\tydi & 151K & 18.7K & None\\
\msmarco & 82.3K & 10K & 9.65K \\
\wino & 40.4K & 1.27K & 1.77K \\
\beer & 80 & 91 & 91 \\
\itunes & 156 & 109 & 109 \\ \hline
\end{tabular}
\caption{Summary of dataset sizes for different tasks.}
\label{tab:dataset_sizes}
\end{table}
\subsection{Single-task fine-tuning: template to task mapping}

Below, we map the \alg templates used to fine-tune models for the corresponding NL task.

\begin{table}[ht]
\centering
\begin{tabular}{lc}
\hline
\textbf{Dataset} & \textbf{Template} \\ \hline
\piqa & \cqa \\
\squad & \docqa \\
\tydi & \docqa \\
\msmarco & \tkret \\
\wino & \subj \\
\beer & \match\\
\itunes & \match \\ \hline
\end{tabular}
\caption{Task to template mapping.}
\label{tab:task_template_map}
\end{table}

\subsubsection{Evaluation details}
For all single-task evaluations found in Table~\ref{tab:cerebras_single} and Table~\ref{tab:neo1b_performance}, we evaluate on 1K examples from  the tests sets, with the exception of \piqa, \squad and \tydi where we sample from the validations sets because they either don't have test sets (\squad, \tydi) or the test sets are unlabeled (\piqa). We run our evaluations across three different test sets, generated using three separate random seeds.

\subsubsection{Single-Task fine-tuning: Additional Results}

Single-task finetuning results for the \cerebras model can be found in Table~\ref{tab:neo1b_performance}.
\label{sec:app_sft}
\begin{table}[ht]
\centering
\begin{tabular}{lcccc}
\hline
\textbf{Dataset} & \textbf{\textsc{cerebras-base}} & \textbf{\textsc{cerebras-few}} & \textbf{\textsc{cookbook-cereb}} \\
\hline
\tydi    & 9.90 \scriptsize{± 0.48} & 8.20 \scriptsize{± 1.85} & \textbf{37.50 \scriptsize{± 1.19}} \\
\squad   & 32.10 \scriptsize{± 0.78} & 34.00 \scriptsize{± 3.40} & \textbf{51.30 \scriptsize{± 1.00}} \\
\piqa    & 0.00 \scriptsize{± 0.00} & 47.50 \scriptsize{± 0.61} & \textbf{52.80 \scriptsize{± 0.13}} \\
\msmarco & 6.60 \scriptsize{± 0.49} & 14.40 \scriptsize{± 1.43} & \textbf{18.70 \scriptsize{± 1.09}} \\
\wino    & 0.17 \scriptsize{± 0.13} & 30.60 \scriptsize{± 20.00} &  \textbf{60.30 \scriptsize{± 0.79}} \\
\beer    & 18.40 \scriptsize{± 0.00} & 26.60 \scriptsize{± 0.00} &  \textbf{74.10 \scriptsize{± 0.00}} \\
\itunes  & 40.00 \scriptsize{± 0.00} & 39.90 \scriptsize{± 0.28} & \textbf{55.40 \scriptsize{± 0.00}} \\
\hline
\hline
\end{tabular}
\caption{Performance comparison of \cerebras. Accuracy is reported for all tasks with the exception of \beer and \itunes for which F1-score is reported.}
\label{tab:neo1b_performance}
\end{table}

\section{Analysis: additional experiments}\label{app_analysis}
Below, we outline additional experiments conducted to better understand \alg.

\subsection{Do rules learned on random token distributions transfer to natural language?} We evaluate transferability of rules learned over random tokens by evaluating the performance of \alg-tuned \mistral models on template generated data over \textit{natural language} data samples. Concretely, for template-generated evaluation data over natural language samples, we do not use random tokens but instead apply the input-output rule to natural language. For example, for \piqa and \cqa, a sample could be \texttt{``Select the choice which best completes the sentence.\textbackslash nWhen boiling butter, when it's ready, you can \textbackslash nChoices: \textbackslash n- glum fray butter boil ready \textbackslash n- glum fray here crest soar wig''}.  By measuring \alg's performance on these tasks, we isolate the random token to NL transfer aspect and eliminate additional noise caused by the gap between imperfect rules to true task reasoning. We evaluate \alg-tuned \mistral models on \cqa, \match, and \cqa template rules applied on \piqa, \itunes, and \tydi data, respectively. Our findings show that the rules learned over the random token distributions do transfer to natural language (see Table~\ref{tab:random_vs_nl}), improving over base performance by up to 29.8 points.

\begin{table}[h!]
\centering
\begin{tabular}{|l|c|c|}
\hline
 & \textbf{\textsc{mistral-base}} & \textbf{\textsc{cookbook-mistral}} \\
\hline
\textbf{\textsc{nl-template-piqa}} & 0.044 & 0.326  \\
\hline
\textbf{\textsc{nl-template-itunes-amazon}} & 0.666 & 0.964\\
\hline
\textbf{\textsc{nl-template-tydiqa}} & 0.016 & 0.076 \\
\hline
\end{tabular}
\caption{\textbf{Random token template to NL-based template data transfer.} Evaluation of random-token-based \alg-tuned model on natural language-based template data. Rules learned over random tokens transfer to the NL setting.}
\label{tab:random_vs_nl}
\end{table}

\subsubsection{When do random tokens work?}
Figure~\ref{fig:checkpointing} shows the effects of pre-training duration on the efficacy of \alg. Our results indicate that models that have a better understanding of NL (models that are trained longer) have more performance gains from \alg.

\begin{figure}[h!]
    \centering
    \includegraphics[width=\textwidth]{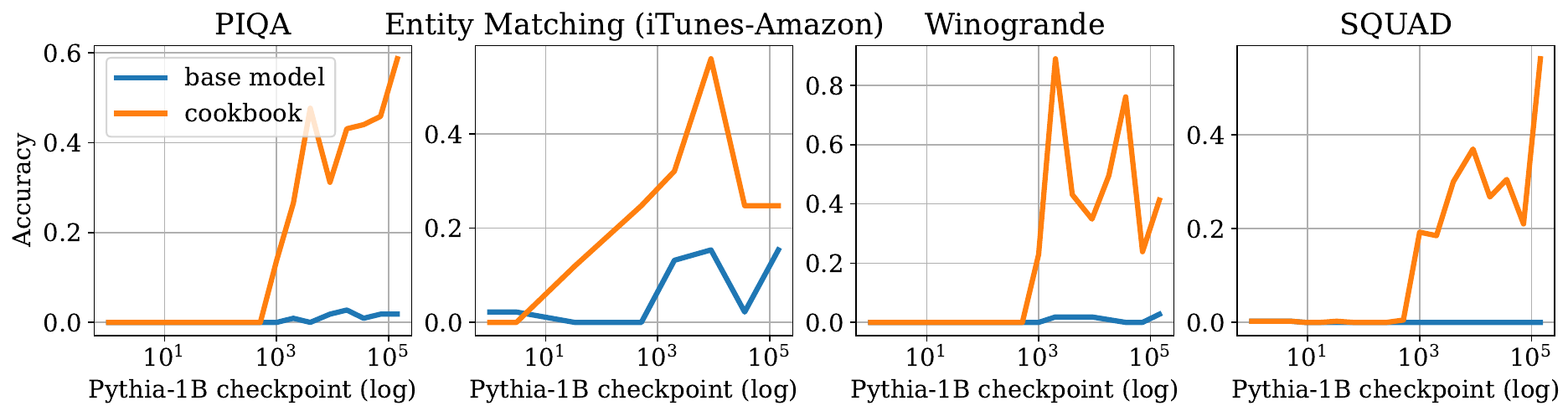}
    \caption{\textbf{Effects of pre-training on random token to NL generalization}. Performance gains from \alg increase with longer pre-training, indicating that maturity of NL understanding is correlated with random-to-NL generalization.}
    \label{fig:checkpointing}
\end{figure}

\subsubsection{Do rules taught over random tokens result in less overfitting?}

Table~\ref{tab:interference} compares the degree of overfitting experienced a model finetuned on a natural language task itself, and a model finetuned on a \alg template.  Comparing fine-tuning on \itunes versus fine-tuning on \match, our results indicate that rules taught via the template do not hurt base performance on other tasks, but the rule taught by NL-tuning do. 

\begin{table}[h!]
\centering
\small
\begin{tabular}{|l|c|c|c|}
\hline
 & \textbf{\textsc{mistral-base}} & \textbf{\textsc{cookbook-mistral}} & \textbf{\textsc{nl-mistral}} \\
\hline
\textbf{\textsc{itunes-amazon}} & 0.696  & 0.823 & 0.875 \\
\hline
\textbf{\textsc{piqa}} & 0.443 & 0.460 & 0.412  \\
\hline
\textbf{\textsc{tydiqa}} &  0.150  & 0.200 & 0.077 \\
\hline
\end{tabular}
\caption{\textbf{Rule overfitting.} Comparison of \alg and NL-tuned models across tasks. The rule taught by template \match, which corresponds to \itunes, doesn't hurt base performance on other tasks, but the skill taught by NL-tuning does.}
\label{tab:interference}
\end{table}

\subsubsection{Do we need data generating functions: Are random tokens all we need?}

Table~\ref{tab:rand_tokens} shows the results of fine-tuning on a data generated from templates with a format but no rule (i.e., random tokens are used as inputs and outputs without any pattern).

\begin{table}[h!]
\small
\centering
\begin{tabular}{|l|c|c|}
\hline
 & \textbf{\textsc{mistral-base}} & \textbf{\textsc{cookbook-norule-mistral}} \\
\hline
\textbf{\textsc{piqa}} & 0.464 &  0.442 \\
\hline
\textbf{\textsc{itunes-amazon}} &  0.697 & 0.195\\
\hline
\textbf{\textsc{tydiqa}} & 0.178 & 0.151 \\
\hline
\end{tabular}
\caption{\textbf{Templates w/o rule.} Evaluation of \mistral tuned on data generated from templates with a $\format_T$ but no rule.}
\label{tab:rand_tokens}
\end{table}

\subsubsection{Does lots of \alg-data impair generative ability?} We study the extent to which training on more \alg data impacts model performance. We run the single-task evaluation from Section~\ref{sec:single-task} for 1000 steps (2-10X times more data). Over the 1000 steps, we find that there is slight degradation in performance for 3 tasks, 
while performance on \squad and \tydi consistently improve with more \alg data.

\begin{figure}[h!]
    \centering
    \includegraphics[width=0.4\textwidth]{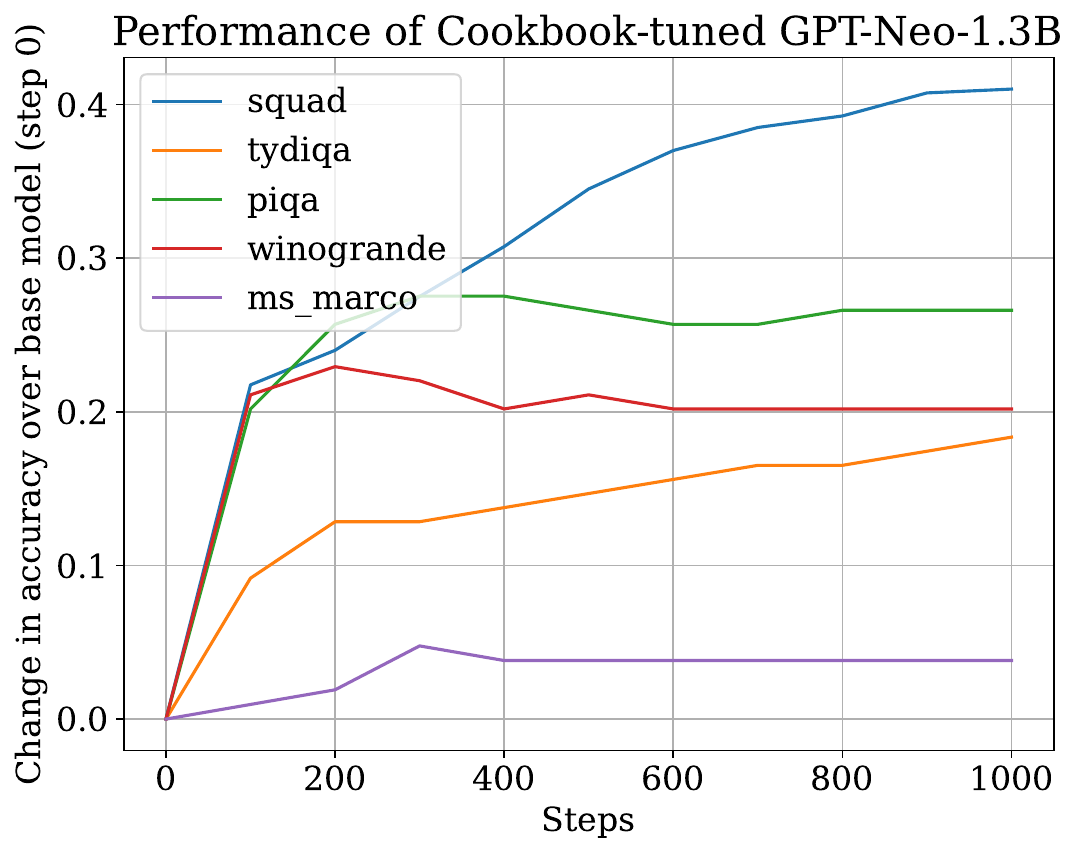}
    \caption{\textbf{Effects of training on more \alg data}. Training on more \alg data slightly impairs performance in some, but not all cases.}
    \label{fig:effects_data}
\end{figure}

\end{document}